\DeclareMathAlphabet{\pazocal}{OMS}{zplm}{m}{n}
\DeclareMathOperator*{\argmin}{argmin}
\newcommand{\norm}[1]{\left\lVert#1\right\rVert}
\theoremstyle{plain}
\newtheorem{theorem}{Theorem}[section]
\newtheorem{proposition}[theorem]{Proposition}
\newtheorem{lemma}[theorem]{Lemma}
\theoremstyle{definition}
\newtheorem{definition}[theorem]{Definition}
\theoremstyle{remark}
\newcommand{\bigcomp}{%
	\DOTSB
	\mathop{\vphantom{\sum}\mathpalette\bigcomp@\relax}%
	\slimits@
}
\newcommand{\bigcomp@}[2]{%
	\begingroup\m@th
	\sbox\z@{$#1\sum$}%
	\setlength{\unitlength}{0.75\dimexpr\ht\z@+\dp\z@}%
	\vcenter{\hbox{%
			\begin{picture}(1,1)
				\bigcomp@linethickness{#1}
				\put(0.5,0.5){\circle{1}}
			\end{picture}%
	}}%
	\endgroup}
\newcommand{\bigcomp@linethickness}[1]{%
	\linethickness{%
		\ifx#1\displaystyle 2\fontdimen8\textfont\else
		\ifx#1\textstyle 1.65\fontdimen8\textfont\else
		\ifx#1\scriptstyle 1.65\fontdimen8\scriptfont\else
		1.65\fontdimen8\scriptscriptfont\fi\fi\fi 3
	}%
}
\begin{document}

%

%

\twocolumn[

\aistatstitle{Learning Geometrically-Informed Lyapunov Functions with Deep Diffeomorphic RBF Networks}

\aistatsauthor{ Samuel Tesfazgi \And Leonhard Sprandl \And  Sandra Hirche }

\aistatsaddress{ TU Munich \And TU Munich \And TU Munich } ]

\begin{abstract}
The practical deployment of learning-based autonomous systems would greatly benefit from tools that flexibly obtain safety guarantees in the form of certificate functions from data. While the geometrical properties of such certificate functions are well understood, synthesizing them using machine learning techniques still remains a challenge. To mitigate this issue, we propose a diffeomorphic function learning framework where prior structural knowledge of the desired output is encoded in the geometry of a simple surrogate function, which is subsequently augmented through an expressive, topology-preserving state-space transformation. Thereby, we achieve an indirect function approximation framework that is guaranteed to remain in the desired hypothesis space. To this end, we introduce a novel approach to construct diffeomorphic maps based on RBF networks, which facilitate precise, local transformations around data. Finally, we demonstrate our approach by learning diffeomorphic Lyapunov functions from real-world data and apply our method to different attractor systems. 
\end{abstract}

\section{INTRODUCTION}
\label{sec:introduction}
With recent advances in robotics and machine learning, data-driven autonomous systems are increasingly deployed in safety-critical application scenarios such as autonomous driving \citep{liu2024augmenting} or robotic rehabilitation \citep{Qingsong2023}. While learning-based systems are particularly well-suited for such complex and uncertain environments, a limitation that inhibits their deployment is the lack of formal safety and stability guarantees. A practical method to ascertain the desired safety and stability properties of a dynamical system is through the construction of certificate functions, e.g., a Lyapunov function to show convergence to an equilibrium point \citep{book:khalil:nonlin}. 
A major strength of these approaches is the existence of converse theorems, i.e., if the desired property holds, a certificate function is guaranteed to exist \citep{Teel14, Liu22}. \textcolor{black}{Besides certification, Lyapunov functions may also be utilized for control synthesis \citep{SONTAG1989117, Tesfazgi2024} and have additionally been deployed as value function approximators in the context of reinforcement learning \citep{Chow2018, Chang19, Tesfazgi2021}.} 

In general, certificate functions express the long-term behavior of a system's trajectory through invariant set constraints. Thereby, the set of states to which the system is bounded or converges to, is geometrically encoded in the level sets of the certificate function. A candidate Lyapunov function, for instance, has to be positive-definite with a strictly decreasing time-derivative. While these conditions can be resolved in simple settings, e.g., when the dynamics are known, and the hypothesis space is limited to sum-of-squares polynomials \citep{parrilo00}, no constructive approach is known for general, nonlinear systems. Therefore, the need for expressive learning techniques that construct certificate functions directly from data arises. \looseness=-1 

Recently, the deployment of neural networks (NNs) has been proposed to learn Lyapunov functions from observations \citep{Richards2018, Ravanbakhsh19, Chang19, Manek19, dawson2021}. 
However, even though NNs have the advantage of strong representational capabilities, imposing the necessary constraints is an open issue.  
Existing methods either induce the Lyapunov conditions via soft-constraints \citep{Chang19}, only admitting empirical statements, or strictly by extensively searching for counter-examples \citep{Ravanbakhsh19}, which is computationally demanding. A promising perspective is to geometrically constrain the output 
by using a suitable architecture \citep{RAISSI2019686}. However, 
the imposed output constraints are either not specific enough, only guaranteeing positive definiteness \citep{Richards2018, dawson2021}, or overly conservative, e.g., using input convex NNs \citep{amos2017icnn, Manek19}. 

\textbf{Contribution.} In this work, we follow an alternative approach of encoding structural knowledge and imposing desired geometric properties on the inferred function by deploying smooth and bijective maps, so-called diffeomorphisms \citep{book:boumal23}. In particular, instead of constraining the output of a function approximator directly, we specify a simple base function with desired geometric properties and subsequently learn a topology-preserving, state-space transformation under which the augmented base function adheres to the data, thereby indirectly obtaining a Lyapunov function. \textcolor{black}{To facilitate the synthesis, we propose a novel diffeomorphism model based on RBF networks in this paper. Our kernel-based architecture enables the generation of precise, local transformations, which help induce constraint satisfaction at the data points.} Beyond point attractors, we additionally demonstrate the applicability of our approach for more general system classes, including multiple equilibria and limit cycles.

While the regularity-preserving properties of diffeomorphisms have been used in the context of imitation learning \citep{Rana20} and control \citep{Sun23}, their utilization for learning certificate functions remains understudied. \textcolor{black}{In particular, since the existence of a diffeomorphic map between a linear base system and a nonlinear, data-generating system is generally not guaranteed \citep{Bevanda22}, the problem of learning diffeomorphic certificate functions directly from data remains highly relevant and warrants separate consideration.}

\section{PRELIMINARIES}
\textbf{Lyapunov stability theory.} Consider an autonomous system\footnote{\textit{Notation:} Lower and upper case bold symbols denote vectors and matrices, 
$\mathbb{R}_+$/$\,\mathbb{N}_+$ all real/integer positive numbers, $\bm{I}_n$ the $n\times n$ identity matrix, $\|\cdot\|$ the Euclidean norm, $|\cdot|$ the absolute value, and $\#(\cdot)$ the cardinality of a set. $\pazocal{L}$ is the set of Lipschitz continuous functions, $\pazocal{D}$ the set of diffeomorphic maps, and $p$-times continuously differentiable functions are denoted by $\pazocal{C}^p$. The composition of two functions is written $f\circ g=f(g(\cdot))$ and the nested composition of functions is denoted by $\bigcomp_{n=1}^{N}f_n = f_N \circ \cdots \circ f_1$. } \looseness=-1
\begin{equation}
	\label{eq:system}
	\dot{\bm{x}} = f(\bm{x}),
\end{equation}
with continuous state $\bm{x} \in \mathbb{R}^n$ and system dynamics ${\bm{f}\colon \mathbb{R}^n \to \mathbb{R}^n}$. The problem of certifying stability is concerned with analyzing the behavior of $\bm{x}(\tau)$ for time $\tau \to \infty$, given some initial state $\bm{x}(\tau_0)=\bm{x}_0$.  In order to formalize this property, we introduce the following concept of stability. \looseness=-1
\begin{definition}[\cite{book:khalil:nonlin}]
\label{def:stab}
    A system \eqref{eq:system} has an asymptotically stable equilibrium $\bm{x}^*$ on the set $\pazocal{X}$ if
    \begin{enumerate}[topsep=0pt,itemsep=2pt]
        \item 
        for all $d \!>\! 0$, there exist  $\delta \!>\! 0$, $\tau_0\!\geq\! 0$ such that $\norm{\bm{x}_0 \!-\! \bm{x}^*} \!<\! \delta$ implies $\norm{\bm{x}(\tau) \!-\! \bm{x}^*} \!<\! d $, $\forall \tau \!\geq\! \tau_0.$
        \item 
        $\lim_{\tau \to \infty}\norm{\bm{x}(\tau) \!-\! \bm{x}^*} \!=\! 0 $ for all $\bm{x}_0\in\pazocal{X}$.
    \end{enumerate}
\end{definition}
If the conditions hold for all states, i.e., $\bm{x}_0\in\mathbb{R}^n$, the equilibrium $\bm{x}^*$ is globally asymptotically stable. Without loss of generality, we assume $\bm{x}^* = \bm{0}$ from now on.  A practical method to ascertain the convergence property of a system, without solving the underlying dynamics equations, is by means of Lyapunov stability theory.
\begin{theorem}[Lyapunov Stability Theorem, \cite{book:khalil:nonlin}]
	\label{thm:lyapunov}
	Let $\bm{x}^* = \bm{0}$ be an equilibrium point for~\eqref{eq:system} and $\pazocal{X} \subset \mathbb{R}^n$ be the domain of $f: \pazocal{X} \mapsto \mathbb{R}^n$ with $\bm{x}^* \in \pazocal{X}$. Let~${V: \pazocal{X} \mapsto \mathbb{R}}$ be a continuously differentiable function such that:
	\begin{subequations}
		\begin{align}
			\label{thm:lyapunov:L0}
			V(\bm{0}) &= 0 \\	
			\label{thm:lyapunov:L1}
			V(\bm{x}) &> 0 \quad \forall \bm{x} \in \pazocal{X}\setminus\{\bm{0}\}\\
			\label{thm:lyapunov:L2}
			\dot{V}(\bm{x}) = \nabla_{x}^\intercal V(\bm{x}) f(\bm{x}) &< 0 \quad \forall \bm{x} \in \pazocal{X}\setminus\{\bm{0}\}
		\end{align}
	\end{subequations}
	Then, $\bm{x}^*$ is locally asymptotically stable in the sense of Definition \ref{def:stab}. 
\end{theorem}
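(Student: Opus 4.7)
The plan is to prove the two conditions of Definition \ref{def:stab} separately, using sublevel sets of $V$ as trapping regions. The key geometric idea is that, because $V$ is positive-definite and $\dot V$ is strictly negative off the origin, the sublevel sets $\{V\leq c\}$ shrink monotonically under the flow, and their nesting around $\bm{0}$ lets us translate bounds on $V$ into bounds on $\|\bm{x}\|$.

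For the stability part (condition 1), I would first fix an arbitrary $d>0$ and choose $r\in(0,d]$ small enough that the closed ball $\bar B_r=\{\bm{x}:\|\bm{x}\|\le r\}$ lies in $\pazocal{X}$. By \eqref{thm:lyapunov:L1} and compactness of the sphere $\{\|\bm{x}\|=r\}$, the quantity
\begin{equation*}
    \alpha \;=\; \min_{\|\bm{x}\|=r} V(\bm{x})
\end{equation*}
is strictly positive. Pick any $\beta\in(0,\alpha)$ and define the sublevel set $\Omega_\beta=\{\bm{x}\in\bar B_r: V(\bm{x})\le\beta\}$. Continuity of $V$ together with $V(\bm{0})=0$ (from \eqref{thm:lyapunov:L0}) implies that $\Omega_\beta$ contains an open neighborhood of $\bm{0}$, so there is $\delta>0$ with $B_\delta\subset\Omega_\beta$. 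The critical observation is that $\Omega_\beta$ is forward-invariant: along any trajectory, \eqref{thm:lyapunov:L2} gives $\tfrac{d}{d\tau}V(\bm{x}(\tau))\le 0$, so $V$ cannot cross the level $\beta$ from below, and $\bm{x}(\tau)$ cannot reach $\|\bm{x}\|=r$ (where $V\ge\alpha>\beta$). Hence trajectories starting in $B_\delta$ stay in $\Omega_\beta\subset B_r\subset B_d$ for all $\tau\ge\tau_0$, establishing the first condition (with any $\tau_0\ge 0$).

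For the convergence part (condition 2), I would note that along any trajectory starting in the region constructed above, $V(\bm{x}(\tau))$ is monotonically non-increasing and bounded below by $0$, so it admits a limit $c\ge 0$. The goal is to rule out $c>0$. If $c>0$, by continuity of $V$ at $\bm{0}$ there exists $\mu>0$ such that $V(\bm{x})<c$ whenever $\|\bm{x}\|<\mu$; therefore the trajectory remains in the compact annulus $A=\{\mu\le\|\bm{x}\|\le r\}$. Applying \eqref{thm:lyapunov:L2} on $A$, the continuous function $\dot V$ attains a maximum $-\gamma$ on $A$ with $\gamma>0$. Integrating yields
\begin{equation*}
    V(\bm{x}(\tau)) \;\le\; V(\bm{x}_0) - \gamma(\tau-\tau_0),
\end{equation*}
which eventually becomes negative, contradicting \eqref{thm:lyapunov:L1}. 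Hence $c=0$, and by positive definiteness of $V$ this forces $\|\bm{x}(\tau)\|\to 0$.

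The main technical obstacle is the invariance argument in the first step, specifically the need to choose the sublevel set $\Omega_\beta$ so that it is strictly contained in the interior of $\bar B_r$ rather than merely intersecting the boundary; this is what requires the compactness-based minimum $\alpha$ and the strict inequality $\beta<\alpha$. Once the trapping sublevel set is in hand, both stability and convergence follow from monotonicity of $V$ along solutions, with the annulus argument providing the quantitative decay rate needed to exclude a nonzero limit.
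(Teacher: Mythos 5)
Your proof is correct and is essentially the classical argument from Khalil that the paper cites for this theorem without reproducing a proof of its own: a trapping sublevel set $\Omega_\beta$ strictly inside a ball for the stability part, followed by the annulus/uniform-decay contradiction to rule out a positive limit of $V$ for the convergence part. The only points left implicit --- forward completeness of solutions confined to the compact set $\Omega_\beta$, and the final compactness step converting $V(\bm{x}(\tau))\to 0$ into $\|\bm{x}(\tau)\|\to 0$ --- are standard and unproblematic.
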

Thus, finding a function $V(\cdot)$ that satisfies \eqref{thm:lyapunov:L0}-\eqref{thm:lyapunov:L2} is sufficient to certify stability of $f(\cdot)$.

\textbf{Multiple Equilibria and Limit Cycles.} 
Beyond asymptotic stability to a single equilibrium, a dynamical system may also exhibit other types of attractor landscapes, such as multiple equilibria, where system trajectories converge to different states out of a set 
\begin{equation}
    {\pazocal{X}^* := \{\bm{x} \in \mathbb{R}^n \: | \: f(\bm{x}) = \bm{0}\}} \nonumber
\end{equation}
depending on the initial state $\bm{x}_0$. Another common attractor are limit cycles, which describe invariant sets $\pazocal{X}^\circ$ under the dynamics $f(\cdot)$ for some orbital period $T$ 
\begin{equation}
	\label{def:setlimitcycle}
	\pazocal{X}^\circ := \{\bm{x} \:|\: \bm{x}(\tau) \!=\! \bm{x}(\tau + T), \, f(\bm{x}) \!\neq\! \bm{0}, \; \forall \tau \geq 0, \, \exists T > 0 \}. \nonumber
\end{equation}
In order to extend the notion of Lyapunov stability analysis to such systems, it is common to introduce a \textit{Lyapunov-like} function \citep{patrao2011existence, bjornsson2015computation} that satisfies the conditions \eqref{thm:lyapunov:L0}-\eqref{thm:lyapunov:L2} for the respective sets $\pazocal{X}^*$ or $\pazocal{X}^\circ$, instead of only $\{\bm{0}\}$. For notational convenience, we collectively use 
$\pazocal{X}^{\bm{0}}$ for all convergence sets in the following.

\textbf{Diffeomorphism.} 
A mapping ${\phi: \mathbb{R}^n \mapsto \mathbb{R}^n}$ is bijective, if it's inverse $\phi^{-1}(\cdot)$ is guaranteed to exist. If the mapping $\phi(\cdot)$ and its inverse $\phi^{-1}(\cdot)$ are further smooth, it is referred to as a \textit{diffeomorphism}, defined as follows. 
\begin{definition}[\cite{book:boumal23}]
	\label{def:diffeo}
	A diffeomorphism $\phi : U \to V$ with open sets $U, V \subseteq \mathbb{R}^n$ is a bijective and smooth map whose inverse $\phi^{-1}$ is also smooth. 
\end{definition}
We denote the set of diffeomorphic maps with $\phi\in\pazocal{D}$. The requirement of $\phi(\cdot)$ being smooth allows the mapping between two \textit{differentiable manifolds}. Since a differentiable manifold is additionally equipped with a differential structure~\citep{book:lee:smooth}, it gives rise to the tangent space required to define gradients, which are necessary for any gradient-based analysis framework, such as Lyapunov stability analysis. Conveniently, diffeomorphic maps preserve the topology of objects, such as functions or differential equations. Intuitively, two sets 
are \textit{topologically equivalent}, if a mapping between the two can be established, with the map and its inverse being continuous~\citep{book:lee:manifolds}. \Cref{fig:diffeogrid} illustratively depicts the difference between a topology-preserving and a non-topology-preserving transformation. \looseness=-1

\begin{figure}
	\centering
	\subfigure[State space $\bm{x}$]{
		\includegraphics[trim={3.3cm 2.5cm 3.3cm 3.3cm}, clip, width=0.132\textwidth] {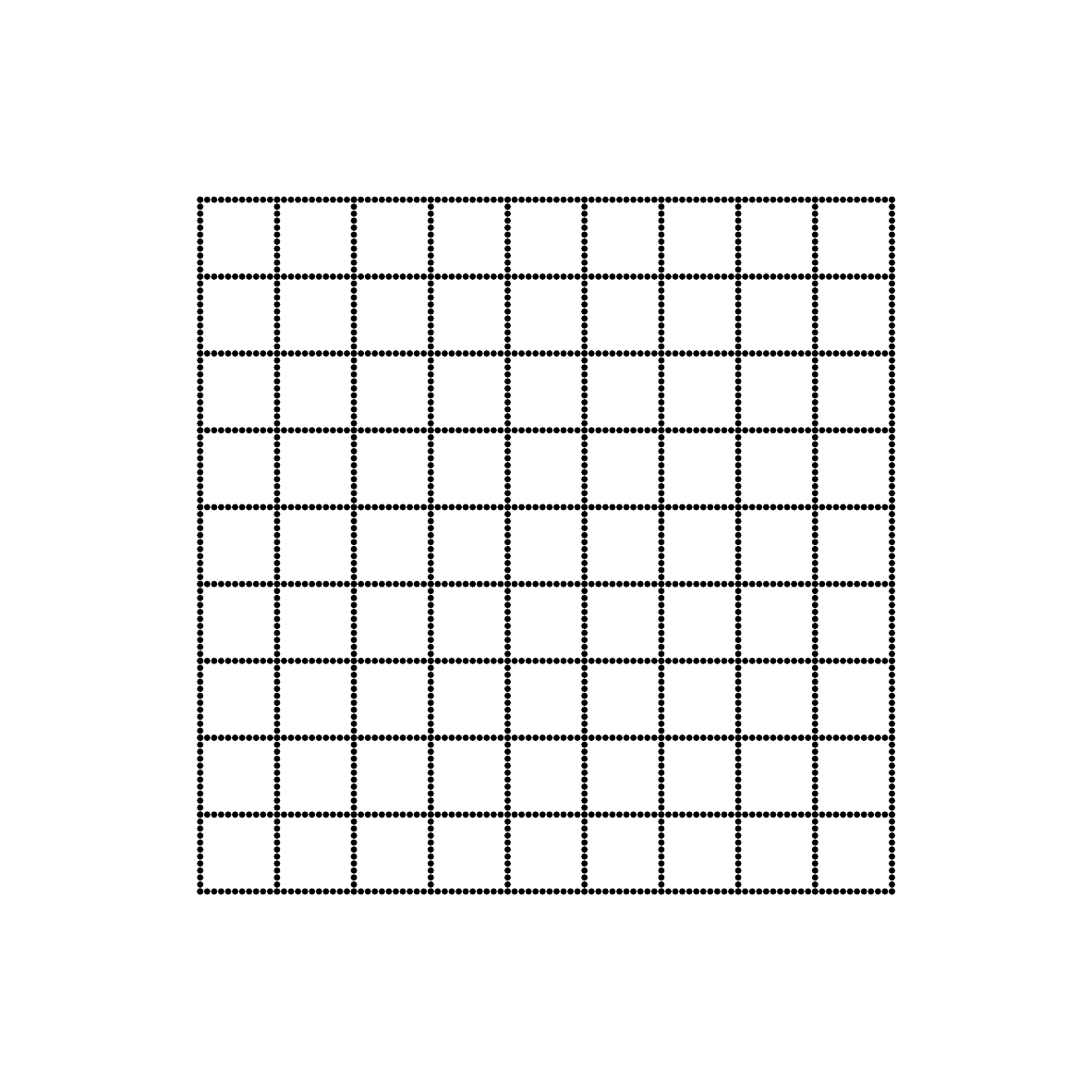}
		\label{fig:diffeogrid:original}
	}
	\hfill
	\subfigure[Invertible]{
		\includegraphics[trim={3cm 2cm 3cm 1cm}, clip, width=0.132\textwidth] {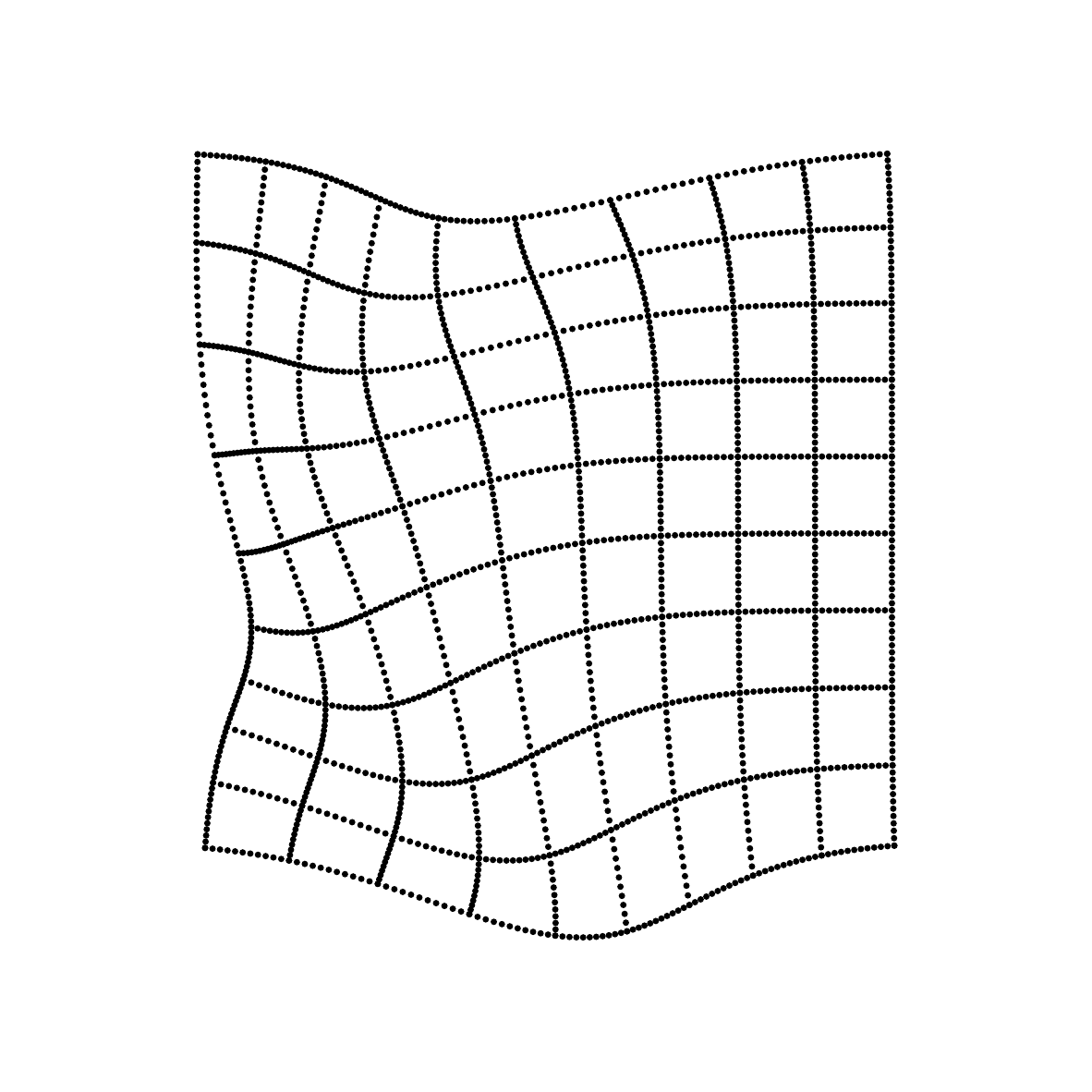}
		\label{fig:diffeogrid:morphed}
	}
	\hfill
	\subfigure[Non-invertible]{
		\includegraphics[trim={3.5cm 2cm 2cm 1cm}, clip, width=0.132\textwidth] {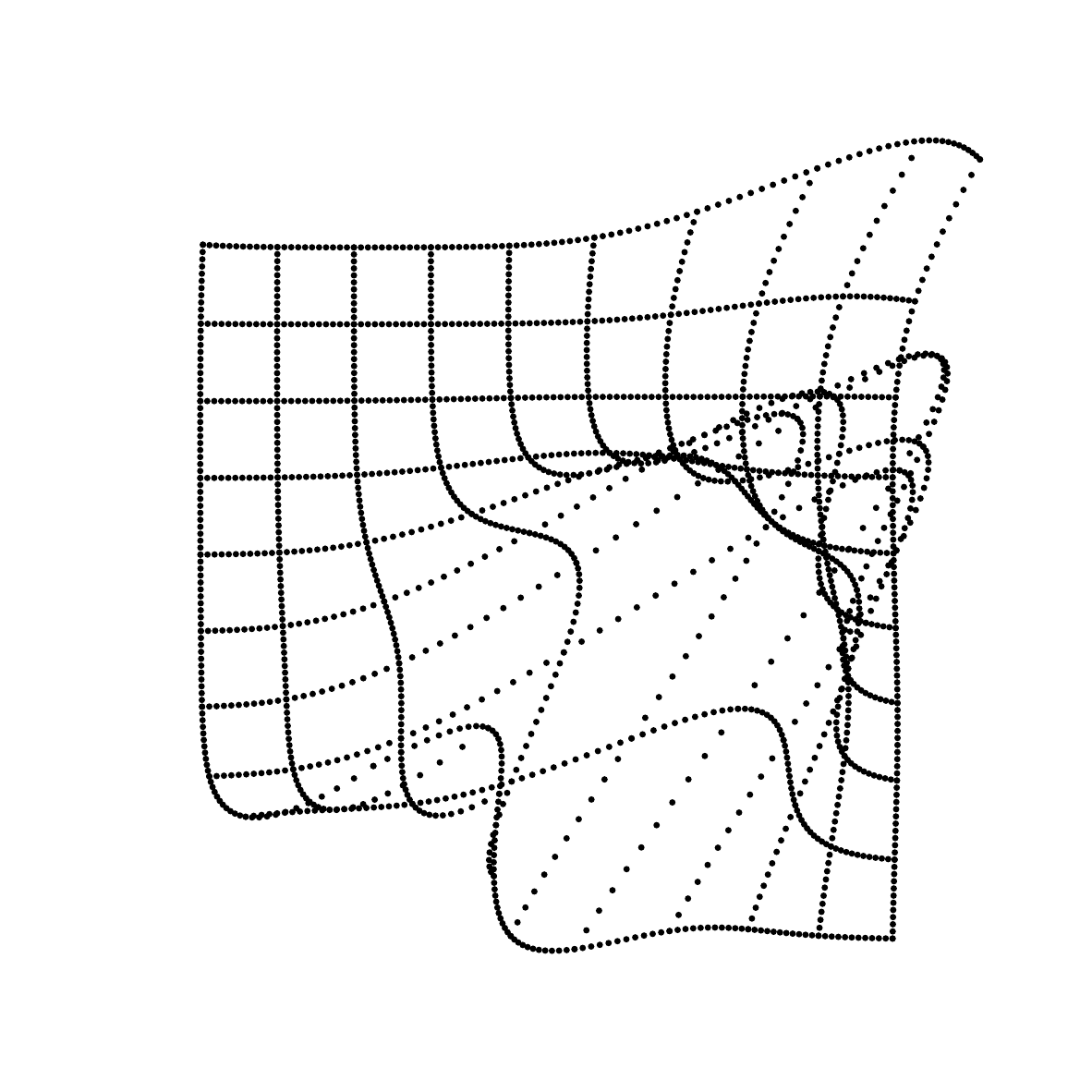}
		\label{fig:diffeogrid:violation}
	}
	\caption[Invertible and Non-Invertible transformations]{Transformations of a 2D space (a) by an invertible (b) and a non-invertible map (c). Invertability requires that the map is unique for each point.}
	\label{fig:diffeogrid}
\end{figure}

\section{DIFFEOMORPHIC LYAPUNOV FUNCTIONS}
Directly searching for a Lyapunov function is difficult since constraints \eqref{thm:lyapunov:L1} and \eqref{thm:lyapunov:L2} need to hold for an uncountable, infinite set of states. To overcome this, we propose to exploit the topological equivalence of Lyapunov functions shown by \cite{paper:gruene99} to reformulate the function approximation problem to an optimization over state-space transformations. 

\subsection{Lyapunov Function Hypothesis Space}
The primary challenge in synthesizing a Lyapunov function is guaranteeing that the gradient $\nabla_{x}V(\cdot)$ fulfills the descent condition \eqref{thm:lyapunov:L2}. Typically, the dynamics $f(\cdot)$ are not known analytically. Thus, in our considered scenario we only have access to trajectory samples $\mathbb{D}=\{\bm{x}^{(i)}, \dot{\bm{x}}^{(i)}\}_{i=1}^N$. However, we may still derive shape-constraints that any potential Lyapunov function candidate has to adhere to locally, since
\begin{equation}
\label{eq:nonvanishing_g}
    \nabla_{x}^\intercal V(\bm{x}) f(\bm{x})\! <\! 0\!\! \implies \!\!\! \nabla_{x}V(\bm{x}) \!\neq \!\bm{0}, \enspace \forall \bm{x} \!\in\! \pazocal{X}\setminus\!\{\pazocal{X}^{\bm{0}}\}.
\end{equation}
Consequently, a positive definite function $V(\cdot)$ with non-vanishing gradient $\nabla_{x}V(\bm{x}) \neq \bm{0}$ is always a valid Lyapunov function for \textit{some} stable system. Therefore, a diffeomorphic transformation that preserves these topological properties is guaranteed to generate an output that remains in the space of Lyapunov functions, which we demonstrate in the following: \looseness=-1
\begin{proposition}
	\label{prop:topoconservation}
	Consider a smooth function $V : \mathbb{R}^n \mapsto \mathbb{R}$ and let $N_V$ denote the number of unique gradient roots of $V(\cdot)$ \looseness=-1
	\begin{equation}
		N_V = \#(\pazocal{S}_V), \quad \text{with }\; \pazocal{S}_V = \{\bm{x} | \nabla_{\bm{x}}V(\bm{x}) = \bm{0}\}
	\end{equation}
	where $\#(\pazocal{S})$ denotes the cardinality of the set $\pazocal{S}$. Next, let $\phi: \mathbb{R}^n \mapsto \mathbb{R}^n$ be an orientation-preserving diffeomorphism, i.e., its Jacobian $\bm{J}_{\phi}\in \mathbb{R}^{n \times n}$ satisfies
	\begin{equation}
		\label{eq:prop:topoconv:pdfdet}
		\det(\bm{J}_\phi(\bm{x})) > 0 \quad \forall \bm{x} \in \mathbb{R}^n.
	\end{equation}
	Then, the number of gradient roots is invariant under the map $\phi(\cdot)$ and $N_V = N_U$, where $U := V \circ \phi$.
\end{proposition}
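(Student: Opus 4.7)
The plan is to reduce the claim to a straightforward bijection argument by pushing the gradient of $U$ through the chain rule and exploiting invertibility of $\bm{J}_\phi$. First, I would compute
\begin{equation}
\nabla_{\bm{x}} U(\bm{x}) = \nabla_{\bm{x}}(V\circ\phi)(\bm{x}) = \bm{J}_\phi^\intercal(\bm{x})\, \nabla_{\bm{y}} V(\phi(\bm{x})),
\end{equation}
which is the standard chain rule for a smooth composition. This is the key algebraic identity from which everything else follows.

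Next, I would use the hypothesis \eqref{eq:prop:topoconv:pdfdet}: since $\det(\bm{J}_\phi(\bm{x}))>0$ for every $\bm{x}$, in particular $\det(\bm{J}_\phi(\bm{x}))\neq 0$, so $\bm{J}_\phi^\intercal(\bm{x})$ is an invertible linear map at every point. Consequently $\bm{J}_\phi^\intercal(\bm{x})\bm{v}=\bm{0}$ if and only if $\bm{v}=\bm{0}$, which immediately yields the pointwise equivalence
\begin{equation}
\nabla_{\bm{x}}U(\bm{x}) = \bm{0} \;\Longleftrightarrow\; \nabla_{\bm{y}}V(\phi(\bm{x})) = \bm{0}, \qquad \forall \bm{x}\in\mathbb{R}^n.
\end{equation}
Rephrasing this in terms of the gradient-root sets, $\bm{x}\in\pazocal{S}_U$ iff $\phi(\bm{x})\in\pazocal{S}_V$, i.e., $\pazocal{S}_U = \phi^{-1}(\pazocal{S}_V)$.

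Finally, I would invoke that $\phi\in\pazocal{D}$ is a diffeomorphism on $\mathbb{R}^n$ and therefore a bijection between $\mathbb{R}^n$ and itself. The restriction $\phi|_{\pazocal{S}_U}\colon \pazocal{S}_U \to \pazocal{S}_V$ is then itself a bijection (injectivity is inherited from $\phi$, and surjectivity follows because any $\bm{y}\in\pazocal{S}_V$ has a unique preimage $\bm{x}=\phi^{-1}(\bm{y})$, which belongs to $\pazocal{S}_U$ by the equivalence above). Hence $N_U = \#(\pazocal{S}_U) = \#(\pazocal{S}_V) = N_V$, which completes the argument.

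I do not anticipate a genuine obstacle here: the proof is essentially a one-line chain-rule calculation followed by a bijection count. The only subtlety worth flagging is that the full strength of \eqref{eq:prop:topoconv:pdfdet} (positive determinant) is not actually needed — nonvanishing $\det(\bm{J}_\phi)$ would suffice for the gradient-root equivalence — so the orientation-preserving hypothesis is only used insofar as it implies invertibility of $\bm{J}_\phi$. I would mention this briefly in the proof but otherwise keep the exposition compact.
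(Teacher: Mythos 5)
Your proof is correct and follows essentially the same route as the paper's: chain rule plus invertibility of $\bm{J}_\phi^\intercal$ to equate the gradient-root sets. You are in fact slightly more careful than the paper — you correctly evaluate $\nabla V$ at $\phi(\bm{x})$ in the chain rule and spell out the final bijection-counting step that the paper dismisses as trivial, and your observation that only $\det(\bm{J}_\phi)\neq 0$ is needed (not positivity) is accurate.
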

\begin{proof}
	Due to~\eqref{eq:prop:topoconv:pdfdet}, $\bm{J}_\phi$ is full rank~$\forall \bm{x} \in \mathbb{R}^n$, and consequently, the nullspace $\text{null}(\bm{J}_\phi(\bm{x}))$ only contains the trivial solution~\citep{strang2019linear}. 
    The same holds for the transpose, since ${\det(\bm{J}_\phi(\bm{x})) = \det(\bm{J}_\phi(\bm{x})^\top) > 0}$. Thus, the \textit{left} nullspace~\citep{strang2019linear} also only contains the trivial solution. 
	Applying the chain rule 
	\begin{equation}
		\label{eq:morphgrad}
		\nabla_x U(\bm{x}) = \frac{\partial}{\partial\bm{x}} V(\phi(\bm{x})) = \bm{J}_{\phi}(\bm{x})^\top\nabla_{\bm{x}}V(\bm{x}).
	\end{equation}
	From \eqref{eq:prop:topoconv:pdfdet} and \eqref{eq:morphgrad}, it trivially follows that ${\#(\pazocal{S}_V) = \#(\pazocal{S}_U)}$, which concludes the proof.
\end{proof}
Thus, it follows that candidate Lyapunov functions are diffeomorphic to one another and consequently that any Lyapunov functions can be transformed into a simple $\pazocal{K}_\infty$ function under a change of coordinates, as proposed in \cite{paper:gruene99}. For a single point attractor system for instance, 
the time derivative of a valid Lyapunov function $V(\cdot)$ has to decrease along the trajectories, thereby necessitating non-vanishing gradients outside of the equilibrium \eqref{eq:nonvanishing_g}. Therefore, each contour line of any $V(\cdot)$ is topologically equivalent to a sphere, hence, admitting a diffeomorphic transformation to one another. 

\subsection{Formulation as Diffeomorphic Learning Problem}
We propose to search over the space of Lyapunov functions by finding an appropriate diffeomorphic transformation without the need to explicitly incorporate shape constraints. The data-driven, diffeomorphic Lyapunov learning problem is formalized as follows:
\begin{definition}
	Given a dataset $\mathbb{D}=\{\bm{x}^{(i)}, \dot{\bm{x}}^{(i)}\}_{i=1}^N$ generated by an unknown stable system~${\dot{\bm{x}} = f(\bm{x})}$ 
    and any initial Lyapunov-like function $V_b\colon\mathbb{R}^n\to\mathbb{R}$ with
    \begin{align}
        \label{eq:top_lyap}
        V_b(\bm{x}) > 0 \;\land\; \nabla_{x}V_b(\bm{x}) \neq \bm{0}, \enspace \forall \bm{x}\in\! \pazocal{X}\setminus\!\{\pazocal{X}^{\bm{0}}\},
    \end{align}
    find a diffeomorphism 
	\label{def:main:dclop}
	\begin{subequations}
        \label{eq:dclop}
		\begin{align}
			\label{eq:dclop:obj}
			\phi^* &= \argmin_{\phi \in \pazocal{D}} L(V_\phi, \mathbb{D}) \\
			\label{eq:dclop:L2}
			&\text{ s.t.} \, \nabla_{\bm{x}}^\intercal V_{\phi}(\bm{x}_t) \dot{\bm{x}}_t < 0 \quad \forall t \in [1, \dots, T] 
		\end{align}
	\end{subequations}
	where $V_\phi(\bm{x}) \!=\! V_b\circ\phi(\bm{x})$ with loss function $L(\cdot)$. 
\end{definition}
Intuitively, \eqref{eq:dclop} reformulates the search for a Lyapunov function, which is a functional optimization problem, as a diffeomorphic optimization problem. First, a simple base function $V_b(\cdot)$ is specified that adheres to the topological properties of a Lyapunov candidate function \eqref{eq:top_lyap}. Then a diffeomorphism is constructed such that the function under the diffeomorphic transformation $V_\phi(\cdot)$ satisfies the Lyapunov conditions on the samples \eqref{eq:dclop:L2}. 
This is convenient, since the geometric properties of a Lyapunov function are well known, and therefore, the surrogate function can be easily specified, e.g., to $V_b(\bm{x})=\bm{x}^\intercal\bm{x}$ for a single attractor system. 

\textbf{Note:} The distinct advantage of encoding geometric knowledge through a base function $V_b(\cdot)$ becomes even more apparent when considering general system classes with different attractor landscapes. Typical function approximation approaches, do not readily extend to more involved attractor landscapes, since the new \textit{Lyapunov-like} function requires different geometric constraints. In contrast, our proposed diffeomorphic learning framework merely requires an appropriate base function $V_b(\cdot)$, that encodes the topology of the desired attractor landscapes. 

\section{DEEP DIFFEOMORPHIC RBF NETWORK}
While the formulation in \eqref{eq:dclop} provides a convenient framework to infer Lyapunov functions through a diffeomorphic reformulation, it still requires optimizing over the space of diffeomorphisms $\pazocal{D}$. In particular, the evaluation of constraint \eqref{eq:dclop:L2} necessitates an analytical expression of the Jacobian $\bm{J}_\phi(\cdot)$ to evaluate the gradient $\nabla_{\bm{x}}^\intercal V_{\phi}(\cdot)$, which some popular invertible modelling frameworks, such as NODEs \citep{paper:chen18} or invertible ResNets \citep{behrmanninvertible} do not provide. While alternative approaches based on partitioned transformations with affine coupling layers (ACLs) \citep{dinh14, paper:dinh16, Kingma18, Rana20} admit an analytic expression of the Jacobian, this is at the cost of limited flexibility regarding coordinate-wise transformations.  

To this end, we propose a novel approach to construct diffeomorphisms using layers of bijective RBF maps, which we call \textbf{D}eep \textbf{D}iffeomorphic \textbf{RBF} \textbf{N}etwork 
(DD-RBFN). The principle architecture follows the residual learning paradigm \citep{resnet16}, where we use kernel machines to learn a residual component that is added to an identity mapping. By an appropriate choice of activation function, i.e., smooth function with analytical, partial derivative, such as a Gaussian kernel, we are able to derive simple box constraints for the network weights that guarantee invertibility of the learned map. \Cref{fig:rbf_layer} illustrates the working principle of our proposed DD-RBFN and \Cref{tab:comparison} compares our method to other invertible network architectures.

\newcommand{\cmark}{\ding{51}}%
\newcommand{\xmark}{\ding{55}}%
\renewcommand{\arraystretch}{1.2}
\begin{table}[t]
\caption{Comparison of diffeomorphism constructions.} \label{tab:comparison}
\begin{center}
\begin{footnotesize}
\begin{tabular}{l|cccc} 
\toprule
  & ACL & NODE & i-ResNet & \textbf{Our} \\
\midrule 
Analytic Forward & \cmark & \xmark & \cmark & \colorbox{green!10!white}{\cmark}\\
Analytic Jacobian & \cmark & \xmark & \xmark & \colorbox{green!10!white}{\cmark}\\
Freeform Jacobian & \xmark & \cmark & \cmark & \colorbox{green!10!white}{\cmark}\\
Sup-Universality & \cmark & \cmark & \textbf{N/A} & \colorbox{green!10!white}{\cmark}\\
\bottomrule
\end{tabular}
\end{footnotesize}
\end{center}
\end{table}

\begin{figure*}
    \centering
    \includegraphics[width=0.9\textwidth]{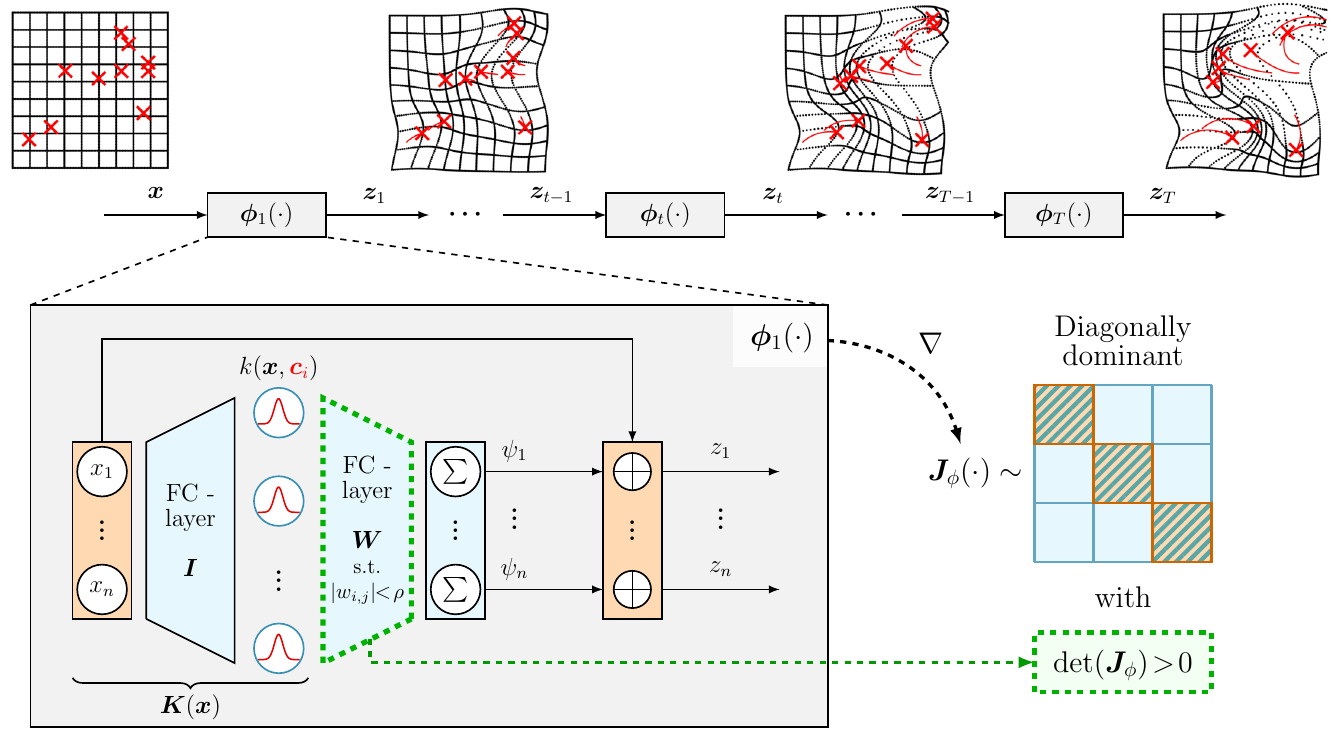}
    \caption{Depiction of the proposed \textbf{D}eep \textbf{D}iffeomorphic \textbf{RBF} \textbf{N}etwork (DD-RBFN) architecture. In each layer, a residual RBF mapping is applied which induces a state-space transformation $\bm{\phi}(\cdot)$ with a diagonally dominant Jacobian $\bm{J}_\phi(\cdot)$. Through appropriate weight bounds $\rho$, the bijectivity of the mapping is guaranteed. }
    \label{fig:rbf_layer}
\end{figure*}

\subsection{Bijective Residual RBF Layer}
Each layer of the DD-RBFN induces a mapping
\begin{align}
    \label{eq:theo:kerneldiffeo:model}
    \bm{\phi}(\bm{x}) &= \bm{x} + \bm{W}\bm{K}(\bm{x}) \\
    &= \bm{x}+\begin{bmatrix}\psi_1(\bm{x})\\ \vdots \\\psi_n(\bm{x})\end{bmatrix},
\end{align}	
with
\begin{equation}
    \label{eq:residual_sums}
    \bm{\psi}_j(\bm{x}) = \sum_i^N w_{i,j}k(\bm{x},\bm{c_i})
\end{equation}
where $N$ is the number of neurons in the hidden layer, $\bm{c}_i\in\mathbb{R}^n$ denotes the center point of the $i$-th neuron, $\bm{W}\in\mathbb{W}^{ n\times N}\!\!\subset\!\mathbb{R}^{n\times N}$ represents the weight matrix of the output layer with entries $w_{i,j}$, and $\bm{K}\in\mathbb{R}^{N\times 1}$ is the vector of radial basis functions 
with entries defined as $K_i(\bm{x}) = k(\bm{x}, \bm{c}_i)$. 
In particular, we deploy multi-variate Gaussian kernels as activation functions 
\begin{equation}
    \label{def:gaussiankernel}
    k(\bm{x}, \bm{c}_i) = \exp\Big(-\frac{1}{2}(\bm{x} - \bm{c}_i)^\top \bm{\Sigma}^{-1} (\bm{x} - \bm{c}_i)\Big),
\end{equation}
where $\bm{\Sigma}\in\mathbb{R}^{n\times n}$ is the positive-definite, symmetric covariance matrix. 
In order to guarantee bijectivity for the mapping \eqref{eq:theo:kerneldiffeo:model}, we make use of a finding regarding \textit{determinants of diagonally dominant matrices}.
\begin{theorem}[Determinant of Diagonally dominant matrices \citep{paper:ostrowski1937}]
	\label{theo:main:detbound}
	If $\bm{A} = \bm{I} - \bm{E}$ is a real $n\times n$ matrix with the elements of $\bm{E}$ being bounded in absolute value by~${\epsilon \leq \frac{1}{n}}$, then a lower bound of the determinant of $\bm{A}$ is given by:
	\begin{equation}
		\det(A) \geq 1 - n\epsilon
	\end{equation}
\end{theorem}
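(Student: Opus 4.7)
The plan is to prove the bound by induction on $n$, using a single column of Gaussian elimination to reduce the problem to dimension $n-1$. For the base case $n=1$, the matrix $\bm{A}$ is the scalar $1 - e_{11}$, and $|e_{11}| \leq \epsilon$ directly gives $\det(\bm{A}) \geq 1 - \epsilon$, which is the claimed bound.

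For the inductive step, since $\epsilon \leq 1/n < 1$, the pivot $1 - e_{11} \geq 1 - \epsilon > 0$ is strictly positive. I would then eliminate each entry $a_{i1} = -e_{i1}$ for $i \geq 2$ by adding $\frac{e_{i1}}{1-e_{11}}$ times row $1$ to row $i$. Elementary row operations preserve the determinant, so cofactor expansion along the first column gives $\det(\bm{A}) = (1 - e_{11})\det(\bm{A}')$, where $\bm{A}'$ is the resulting $(n-1)\times(n-1)$ lower-right block.

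The crux of the argument is to verify that $\bm{A}' = \bm{I}_{n-1} - \bm{E}'$ again satisfies the hypothesis in dimension $n-1$. A direct expansion of the row update shows that each entry of $\bm{E}'$ takes the form $e_{ij} + \frac{e_{i1} e_{1j}}{1 - e_{11}}$ and is therefore bounded in absolute value by $\epsilon + \frac{\epsilon^{2}}{1-\epsilon} = \frac{\epsilon}{1 - \epsilon} =: \epsilon'$. The hypothesis $\epsilon \leq 1/n$ rearranges exactly to $\epsilon' \leq \frac{1}{n-1}$, so the inductive hypothesis applies and yields $\det(\bm{A}') \geq 1 - (n-1)\epsilon' = \frac{1 - n\epsilon}{1 - \epsilon}$. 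Combining this with $1 - e_{11} \geq 1 - \epsilon$ gives $\det(\bm{A}) \geq (1-\epsilon) \cdot \frac{1 - n\epsilon}{1 - \epsilon} = 1 - n\epsilon$, closing the induction.

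The main obstacle is identifying this self-consistent reduction: the Gaussian-elimination step rescales the error bound via $\epsilon \mapsto \epsilon/(1-\epsilon)$, and the hypothesis $\epsilon \leq 1/n$ is precisely the threshold at which this map preserves the structural constraint needed to continue the induction. More immediate approaches are strictly weaker—e.g., Gershgorin circles applied to $\bm{E}$ show that every eigenvalue $\lambda$ of $\bm{A}$ satisfies $\mathrm{Re}(\lambda) \geq 1 - n\epsilon$, but only deliver the estimate $\det(\bm{A}) \geq (1 - n\epsilon)^{n}$, which is strictly weaker than the claimed linear bound once $n\epsilon \in (0,1)$.
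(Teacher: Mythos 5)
Your proof is correct, and it is worth noting that the paper itself offers no proof of this statement at all: Theorem~4.1 is imported verbatim from Ostrowski's 1937 paper as a cited classical result, so there is no in-paper argument to compare against. Your induction via a single step of Gaussian elimination is a clean, self-contained derivation: the pivot $1-e_{11}\geq 1-\epsilon>0$ is valid, the Schur-complement entries $e_{ij}+\tfrac{e_{i1}e_{1j}}{1-e_{11}}$ are indeed bounded by $\epsilon'=\tfrac{\epsilon}{1-\epsilon}$, the equivalence $\epsilon\leq\tfrac{1}{n}\iff\epsilon'\leq\tfrac{1}{n-1}$ is exactly right, and the final product bound is legitimate because both factors $1-e_{11}$ and $\det(\bm{A}')$ have nonnegative lower bounds, so the bounds multiply. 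The telescoping $(1-\epsilon)\cdot\tfrac{1-n\epsilon}{1-\epsilon}=1-n\epsilon$ closes the induction with no slack, which is consistent with the sharpness of Ostrowski's constant. Your closing observation that Gershgorin only yields $(1-n\epsilon)^n$ correctly identifies why the elimination-based argument is needed rather than a naive spectral one. The only cosmetic caveat: for the purposes of this paper the theorem is applied with strict inequalities (the weight constraints in Theorem~4.2 are strict), so you may wish to note that your argument gives $\det(\bm{A})>1-n\epsilon$ whenever the entrywise bound is strict, which is what the invertibility claim downstream actually uses.
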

\textcolor{black}{By exploiting \Cref{theo:main:detbound}, we derive conditions for the weight matrix $\bm{W}$ under which the invertibility of our mapping \eqref{eq:theo:kerneldiffeo:model} is guaranteed.}
\begin{theorem}[Bijective Residual RBF Map]
    \label{theo:main:kerneldiffeo}
    Consider a mapping $\bm{\phi}(\cdot)$ as in \eqref{eq:theo:kerneldiffeo:model} 
    with multi-variate Gaussian kernel $k(\cdot,\cdot)$ \eqref{def:gaussiankernel} as an activation function. If the entries $w_{i,j}$ in the weight matrix $\bm{W}$ adhere to the box-constraints
	\begin{equation}
    \begin{split}
        \label{eq:main:kerneldiffeo:boxconstraint}
		| w_{i, j} | < &\frac{1}{nN\sum_{l=1}^{n}|Q_{j, l}|e^{-\frac{1}{2}}\sqrt{D_{j, j}}} := \rho(n, N, \bm{\Sigma}) \\ &\qquad\qquad\qquad \forall i \in [1, \dots, N], j \in [1, \dots, n],
    \end{split}
	\end{equation}
    where $\bm{Q}$ and $\bm{D}$ are due to the eigendecomposition $\bm{\Sigma}^{-1} = \bm{Q}\bm{D}\bm{Q}^\top$, then $\bm{\phi}(\cdot)$ is a diffeomorphism.
\end{theorem}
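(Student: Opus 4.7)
The plan is to exhibit $\bm{J}_{\bm{\phi}}(\bm{x})$ as a perturbation of the identity $\bm{J}_{\bm{\phi}}(\bm{x})=\bm{I}+\bm{E}(\bm{x})$ and then invoke Ostrowski's bound (\Cref{theo:main:detbound}) after showing that the entries of $\bm{E}(\bm{x})$ are uniformly small under the weight bound \eqref{eq:main:kerneldiffeo:boxconstraint}. Concretely, from \eqref{eq:theo:kerneldiffeo:model} one has
\begin{equation*}
[\bm{E}(\bm{x})]_{j,l} \;=\; \frac{\partial\psi_j}{\partial x_l}(\bm{x}) \;=\; -\sum_{i=1}^{N} w_{i,j}\,k(\bm{x},\bm{c}_i)\,\bigl[\bm{\Sigma}^{-1}(\bm{x}-\bm{c}_i)\bigr]_l,
\end{equation*}
using the standard Gaussian gradient identity. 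The goal of the next step is to bound $\bigl|k(\bm{x},\bm{c}_i)\bigl[\bm{\Sigma}^{-1}(\bm{x}-\bm{c}_i)\bigr]_l\bigr|$ uniformly in $\bm{x}$.

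I would perform the change of coordinates $\bm{y}=\bm{Q}^{\top}(\bm{x}-\bm{c}_i)$ induced by the eigendecomposition $\bm{\Sigma}^{-1}=\bm{Q}\bm{D}\bm{Q}^{\top}$. This rewrites the kernel as $k(\bm{x},\bm{c}_i)=\exp\bigl(-\tfrac12\sum_{l'} D_{l',l'}y_{l'}^2\bigr)$ and the $l$-th gradient component as $\sum_{l'}Q_{l,l'}D_{l',l'}y_{l'}$. The key scalar inequality is the one-dimensional bound
\begin{equation*}
\sqrt{D_{l',l'}}\,|y_{l'}|\,\exp\!\bigl(-\tfrac12 D_{l',l'}y_{l'}^2\bigr) \;\le\; \max_{u\ge 0} u\, e^{-u^2/2} \;=\; e^{-1/2},
\end{equation*}
which, after dropping the remaining non-positive exponents and applying the triangle inequality row-wise, yields a bound of the form $\bigl|k(\bm{x},\bm{c}_i)\bigl[\bm{\Sigma}^{-1}(\bm{x}-\bm{c}_i)\bigr]_l\bigr|\le e^{-1/2}\sum_{l'}|Q_{l,l'}|\sqrt{D_{l',l'}}$, i.e.\ exactly the quantity appearing in the denominator of $\rho(n,N,\bm{\Sigma})$.

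Combining this estimate with $|w_{i,j}|<\rho(n,N,\bm{\Sigma})$ and summing the $N$ kernel terms gives $|[\bm{E}(\bm{x})]_{j,l}|<\tfrac{1}{n}$ uniformly in $\bm{x}$ and in $(j,l)$, so Ostrowski's theorem applies with $\epsilon<1/n$ and yields
\begin{equation*}
\det\bigl(\bm{J}_{\bm{\phi}}(\bm{x})\bigr) \;>\; 1-n\epsilon \;>\; 0 \qquad \forall\,\bm{x}\in\mathbb{R}^n.
\end{equation*}
Hence $\bm{\phi}$ is a smooth local diffeomorphism and orientation-preserving (so \Cref{prop:topoconservation} also becomes available for downstream use).

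The final step, which I expect to be the actual subtle point, is lifting the \emph{local} invertibility guaranteed by $\det \bm{J}_{\bm{\phi}}\neq 0$ to a \emph{global} diffeomorphism. I would handle this by invoking Hadamard's global inverse function theorem: since the Gaussian kernel is bounded ($|k|\le 1$) and the entries of $\bm{W}$ are bounded by $\rho(n,N,\bm{\Sigma})$, the residual $\bm{W}\bm{K}(\bm{x})$ is uniformly bounded in norm, so $\|\bm{\phi}(\bm{x})\|\ge\|\bm{x}\|-C\to\infty$ as $\|\bm{x}\|\to\infty$, i.e.\ $\bm{\phi}$ is proper. A proper $C^{\infty}$ map with everywhere non-singular Jacobian is a $C^{\infty}$ covering map of $\mathbb{R}^n$, and since $\mathbb{R}^n$ is simply connected this covering must be trivial; hence $\bm{\phi}$ is a bijection and, by the inverse function theorem, its inverse is smooth, so $\bm{\phi}\in\pazocal{D}$. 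Everything else is routine algebra; the one place I would be careful is matching the indexing of $\bm{W}$ in \eqref{eq:residual_sums} with the row/column indexing used when reading the Jacobian entries, so that the weight $w_{i,j}$ actually controls the $j$-th row of $\bm{E}(\bm{x})$ as required by the diagonal-dominance argument.
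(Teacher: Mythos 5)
Your proof follows the same core route as the paper: write $\bm{J}_{\bm{\phi}}$ as the identity plus a perturbation whose entries are the weighted sums of Gaussian-kernel partial derivatives, bound those entries uniformly by $1/n$ via the weight constraint, and invoke \Cref{theo:main:detbound} to get $\det(\bm{J}_{\bm{\phi}})>0$ everywhere. You go further than the paper in two useful ways. First, the paper merely asserts that the maximum of $\|\partial k/\partial x_j\|$ ``can be computed in closed form''; you actually carry out the computation via the coordinate change $\bm{y}=\bm{Q}^{\top}(\bm{x}-\bm{c}_i)$ and the scalar bound $\max_{u\ge0}u\,e^{-u^2/2}=e^{-1/2}$, recovering the quantity $e^{-1/2}\sum_{l'}|Q_{l,l'}|\sqrt{D_{l',l'}}$ in the denominator of $\rho$ (and your derivation correctly ties the $\sqrt{D}$ index to the summation index, which exposes a likely index typo in \eqref{eq:main:kerneldiffeo:boxconstraint}, where $\sqrt{D_{j,j}}$ sits outside the sum over $l$; your closing remark about row/column bookkeeping is exactly the right caution). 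Second, and more substantively, you close the local-to-global gap: the paper's proof stops at $\det(\bm{J}_{\bm{\phi}})>0$ and concludes ``invertibility,'' but a nonvanishing Jacobian determinant only yields a local diffeomorphism. Your properness argument ($\|\bm{\phi}(\bm{x})\|\ge\|\bm{x}\|-C$ since the residual is uniformly bounded) combined with Hadamard's global inverse function theorem supplies global bijectivity, which the paper's proof omits. An equivalent and arguably more elementary patch, in the spirit of invertible ResNets, is to note that the entrywise bound $|[\bm{E}]_{j,l}|<1/n$ makes the residual $\bm{W}\bm{K}(\cdot)$ a strict contraction in the $\ell_\infty$-norm, so $\bm{x}\mapsto\bm{x}+\bm{W}\bm{K}(\bm{x})$ is globally invertible by the Banach fixed-point theorem; either way, your version is the complete one.
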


\begin{proof}
To enforce bijectivity of the mapping $\bm{\phi}(\cdot)$, we make use of the residual structure in \eqref{eq:theo:kerneldiffeo:model}, since it induces a Jacobian $\bm{J}_{\phi}(\cdot)$ that is decomposable into an identity matrix $\bm{I}_n$ and a \textit{disturbance} matrix $\bm{E}\in\mathbb{R}^{n\times n}$
\begin{equation}
    \begin{split}
        \bm{J}_\phi(\bm{x}) &= \begin{bmatrix} 
        1 + \frac{\partial \bm{\psi}_1(\bm{x})}{\partial x_1} & \frac{\partial \bm{\psi}_1(\bm{x})}{\partial x_2} & \dots & \frac{\partial \bm{\psi}_1(\bm{x})}{\partial x_n}\\
        \frac{\partial \bm{\psi}_2(\bm{x})}{\partial x_1} & 1 + \frac{\partial \bm{\psi}_2(\bm{x})}{\partial x_2} & & \vdots \\
        \vdots & & \ddots & \vdots \\
        \frac{\partial \bm{\psi}_n(\bm{x})}{\partial x_1} &        & & 1 + \frac{\partial \bm{\psi}_n(\bm{x})}{\partial x_n} 
    \end{bmatrix} \\ &:= \bm{I}_n - \bm{E}(\bm{x}).
    \end{split}
\end{equation}
\textcolor{black}{From Theorem~\ref{theo:main:detbound} we have that the determinant $\det(\bm{J}_\phi)$ of such a decomposable matrix can be bounded from below 
\begin{equation}
    \det(\bm{J}_\phi(\bm{x})) \geq 1 - n\epsilon > 0,
\end{equation}}
if the absolute value of the elements of the disturbance matrix $\bm{E}$ admit a bound $\epsilon$ as such: 
\begin{equation}
\label{eq:epsilon_bound}
    \epsilon = \max_{l,j}(\| E_{l,j}\|) \leq \frac{1}{n}, \quad \forall l \in [1, \dots, n], j \in [1, \dots, n].
\end{equation}
Since the elements of $\bm{E}$ consist of partial derivatives of the residual terms \eqref{eq:residual_sums}, i.e., ${E_{l,j}=-\frac{\partial \psi_l(\bm{x})}{\partial x_j}}$, the inequality \eqref{eq:epsilon_bound} directly translates to a bound on the weighted sum of partial derivatives of the RBFs 
\begin{equation}
    \label{eq:sum_partial_derivative_bound}
    \max_{i,j}\Big(\sum_{i=1}^{N} \left|w_{i, j}\right| \, \left\|\frac{\partial k(\bm{x},\bm{c_i})}{\partial x_j}\right\|\Big) \leq \frac{1}{n}.
\end{equation}
For the choice of Gaussian kernel $k(\cdot,\cdot)$ \eqref{def:gaussiankernel} 
the maximum value of $\|\frac{\partial k}{\partial x_j}\|$ is bounded, can be computed in closed-form, and only depends on the entries of $\bm{\Sigma}$. Thus, it suffices to appropriately bound the weights $|w_{i, j}|$ 
to guarantee the positive-definiteness of $\det(\bm{J}_{\phi})$ and thereby the invertibility of the mapping $\bm{\phi}(\cdot)$. 
\end{proof}
\Cref{theo:main:kerneldiffeo} provides a bound on the maximum, relative change that the state-space transformation $\bm{\phi}(\cdot)$ may induce 
such that 
bijectivity is ensured. 
The corresponding weight bound \eqref{eq:main:kerneldiffeo:boxconstraint} is inversely proportional to the state dimension $n$, the number of RBF centers $N$, and the shape of the RBF activation functions encoded by $\bm{\Sigma}$. Therefore, intuitively the admissible deformation decreases, if there are many basis functions with small bandwidths, since many local changes are more likely to induce a non-bijective mapping. Additionally, the eigendecomposition of $\bm{\Sigma}^{-1}$ in \Cref{theo:main:kerneldiffeo} facilitates bounding the maximum value of $\|\frac{\partial k}{\partial x_j}\|$ for the more general symmetric covariance case, as illustrated in Figure \ref{fig:decomp}.
\begin{figure}
    \centering
    \includegraphics[width=0.45\textwidth]{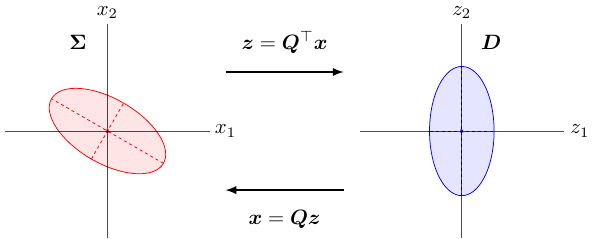}
    \caption{Visualization of bound derivation for Gaussian kernel with symmetric covariance. The kernel with diagonal covariance $\bm{D}$  is equivalent to $k$ under a state-space transformation. This transformation is given by the matrix $\bm{Q}$ under which the original~$\bm{\Sigma}$ is axis-aligned with the new coordinate axis $\bm{z} = \bm{Q}^\top\bm{x}$.}
    \label{fig:decomp}
\end{figure}

\textbf{Note:} The derived constraint \eqref{eq:main:kerneldiffeo:boxconstraint} 
is conservative, as \Cref{theo:main:kerneldiffeo} imposes 
the same absolute bound on all coefficients instead of considering the weighted sum as shown in \eqref{eq:sum_partial_derivative_bound}. Advantageously, this leads to 
simple box-constraints that only depend on a-priori known quantities facilitating a linear optimization problem. Critically, the bound for each weight $w_{i,j}$ is independent of the other coefficients, thereby admitting an optimization over all weights in a distributed fashion.



\subsection{Composition of Network Layers}
While the formulation in \Cref{theo:main:kerneldiffeo} is convenient from an optimization perspective, it also imposes a limitation on the admissible network weights and thereby restricts the expressivity of each bijective RBFN layer. To overcome this, we exploit a property of invertible maps, i.e., that bijectivity is preserved under composition. Since the composition of smooth functions is trivially smooth, the same holds for diffeomorphisms, which is formalized in the following.

\begin{proposition}{	\label{prop:diffeocomp}
The composition of $T$ orientation preserving diffeomorphisms 
	\begin{equation}
		\label{eq:prop:kernelcomp}
		\bm{\Phi}(\bm{x}) \coloneqq \bigcomp_{t=1}^T \bm{\phi}_t(\bm{x}), \quad t \in \mathbb{N}_+, 
	\end{equation}
is also an orientation preserving diffeomorphism.}
\end{proposition}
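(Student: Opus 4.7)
The plan is to verify the three defining properties of an orientation-preserving diffeomorphism separately for the composition $\bm{\Phi}$: smoothness, bijectivity with smooth inverse, and positivity of the Jacobian determinant.

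First I would handle smoothness by noting that each $\bm{\phi}_t\in\pazocal{D}$ is smooth by Definition~\ref{def:diffeo}, and since the composition of smooth maps between Euclidean spaces is smooth, $\bm{\Phi}=\bm{\phi}_T\circ\cdots\circ\bm{\phi}_1$ is smooth. Next, bijectivity follows from a standard set-theoretic fact: the composition of bijections is a bijection, with inverse $\bm{\Phi}^{-1} = \bm{\phi}_1^{-1}\circ\cdots\circ\bm{\phi}_T^{-1}$. Each $\bm{\phi}_t^{-1}$ is smooth by Definition~\ref{def:diffeo}, so $\bm{\Phi}^{-1}$ is again a composition of smooth maps and hence smooth. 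This establishes that $\bm{\Phi}\in\pazocal{D}$.

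For the orientation-preserving property, I would apply the chain rule to obtain
\begin{equation}
    \bm{J}_{\bm{\Phi}}(\bm{x}) = \prod_{t=1}^{T} \bm{J}_{\bm{\phi}_t}\!\big(\bm{\phi}_{t-1}\circ\cdots\circ\bm{\phi}_1(\bm{x})\big),
\end{equation}
where the product is taken in the correct (right-to-left) order of matrix multiplication. Taking determinants and using multiplicativity gives
\begin{equation}
    \det\bigl(\bm{J}_{\bm{\Phi}}(\bm{x})\bigr) = \prod_{t=1}^{T} \det\!\Bigl(\bm{J}_{\bm{\phi}_t}\!\big(\bm{\phi}_{t-1}\circ\cdots\circ\bm{\phi}_1(\bm{x})\big)\Bigr) > 0,
\end{equation}
since each factor is strictly positive by the orientation-preserving assumption on the constituent maps, as in condition \eqref{eq:prop:topoconv:pdfdet}. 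Hence $\bm{\Phi}$ is orientation-preserving.

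Since all three conditions are essentially bookkeeping consequences of standard facts (composition of smooth maps is smooth, composition of bijections is a bijection, determinant is multiplicative), I do not anticipate any genuine obstacle. The only mild subtlety worth being explicit about is the order of composition of the inverses and the order of the Jacobian product, which I would state carefully to avoid confusion; everything else is immediate.
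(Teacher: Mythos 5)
Your proof is correct and follows exactly the route the paper intends: the paper states this proposition without a formal proof, relying on the surrounding prose observation that bijectivity and smoothness are preserved under composition, and your addition of the chain-rule/determinant-multiplicativity argument for orientation preservation is the standard completion of that sketch. No gaps.
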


Thus, we propose to construct compositions of bijective RBFN layers to obtain a more expressive mapping resulting in the DD-RBFN architecture:  
\begin{equation}
\begin{split}
    	\label{eq:rbfn_comp}
	\bm{\Phi}_{\text{DD-RBFN}}(\bm{x}) &\coloneqq \bigcomp_{t=1}^T \underbrace{\bm{\phi}_{t-1}(\bm{x}) + \bm{W}_{t}\bm{K}_{t}(\bm{\phi}_{t-1}(\bm{x}))}_{\coloneqq \bm{\phi}_{t}(\bm{x})}, \\ &\enspace \text{s.t. } |w_{i, j, t}| \leq \rho_t(n, N_t, \bm{\Sigma}_t),
\end{split}
\end{equation}
with $\bm{\phi}_{0}(\cdot)=\bm{I}$ and where $t$ denotes the iterator variable over the layers. For each $\bm{\phi}_{t}(\cdot)$, the number of center points $N_t$, the center positions $\bm{c}_{t,i}$, and covariance matrix $\bm{\Sigma}_t$ can be chosen freely. Since the coefficient bounds $\rho_t(\cdot)$ in \eqref{eq:main:kerneldiffeo:boxconstraint} are independent of the center positions, recomputing them is only necessary if $N_t$ or $\bm{\Sigma}_t$ changes. \Cref{fig:rbf_layer} (top) illustratively depicts the DD-RBFN architecture, where we denote the output of the composition at the $t$-th layer by $\bm{z}_t\in\mathbb{R}^n$, e.g., $\bm{z}_0=\bm{x}$ and $\bm{z}_2=\bm{\phi}_2(\bm{\phi}_1(\bm{x}))$, for improved readability.


The composite map \eqref{eq:rbfn_comp} can also be interpreted as the evolution of a time-varying, discrete-time system
\begin{equation}
    \label{eq:discrete_sys}
    \bm{z}_{t+1} = \bm{z}_{t} + \bm{W}_{t}\bm{K}_{t}(\bm{z}_{t}), \quad \text{s.t. } |w_{i, j, t}| \leq \rho_t(n, N_t, \bm{\Sigma}_t),
\end{equation}
where the dynamics of the difference equation \eqref{eq:discrete_sys} are driven by the weight matrix $\bm{W}_t$ and the gram matrix $\bm{K}_t(\cdot)$. While some approaches exploit the existence and uniqueness of solutions to Lipschitz continuous ordinary differential equations \citep{book:khalil:nonlin, paper:chen18} to construct diffeomorphims, this property does not hold for discrete-time difference equations in general. Hence, intuitively the weight constraints $\rho_t$ induce a Lipschitz bound such that the difference equation \eqref{eq:discrete_sys} has a unique solution everywhere.

\subsection{Flow Endpoints and Sup-Universality}
In this section, we analyze the approximation capabilities of DD-RBFNs and demonstrate that our proposed architecture is a $\sup$-universal approximator 
for 
$\pazocal{C}^2$ diffeomorphisms.

A well-known result in the literature is that sufficiently wide, one-layer RBFNs are universal approximators on a compact subset of $\mathbb{R}^n$, under mild conditions 
\citep{Park91}. Similarly, linear combinations of universal kernels, such as Gaussian kernels \eqref{def:gaussiankernel}, are known to approximate continuous functions arbitrarily well \citep{Micchelli06}. This property extends to the residual-based mapping $\bm{\phi}(\cdot)$ in \eqref{eq:theo:kerneldiffeo:model}.
\begin{lemma}
	\label{lem:unconstrainedldk}
	Let $\phi: \mathbb{R}^n \to \mathbb{R}^n$ be defined as
	\begin{equation}
		\label{theo:uapunconstrained}
		\phi(\bm{x}) = \bm{x} + \bm{W}\bm{K}(\bm{x})
	\end{equation}
	with $\bm{W} \in \mathbb{R}^{n \times N}$ and denote by $\pazocal{M}_{\text{}}$ the set of all possible $\phi$ as defined in~\eqref{theo:uapunconstrained}. Then, $\pazocal{M}_{\text{}}$ is a $\sup$-universal approximator for Lipschitz continuous maps~$\pazocal{L}(\mathbb{R}^n)$. 
\end{lemma}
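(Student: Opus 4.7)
The plan is to reduce sup-universality of $\pazocal{M}$ to the classical universal approximation theorem for Gaussian RBF networks, by peeling off the identity summand. Any Lipschitz continuous target $g \in \pazocal{L}(\mathbb{R}^n)$ can be written as $g(\bm{x}) = \bm{x} + h(\bm{x})$ with $h(\bm{x}) := g(\bm{x}) - \bm{x}$, which is again Lipschitz (difference of two Lipschitz maps) and therefore continuous. Hence for any element $\phi(\bm{x}) = \bm{x} + \bm{W}\bm{K}(\bm{x}) \in \pazocal{M}$, the approximation error is exactly $\phi(\bm{x}) - g(\bm{x}) = \bm{W}\bm{K}(\bm{x}) - h(\bm{x})$. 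So the statement reduces to showing that $\{\bm{W}\bm{K}(\cdot) : N \in \mathbb{N}_+,\, \bm{W}\in\mathbb{R}^{n\times N},\, \bm{c}_i\in\mathbb{R}^n\}$ is a $\sup$-universal approximator on compact sets for continuous $\mathbb{R}^n$-valued maps.

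Fix a compact set $K \subset \mathbb{R}^n$ and $\epsilon > 0$. I would then argue coordinate-wise: for each $j \in \{1,\dots,n\}$, the scalar component $h_j: K \to \mathbb{R}$ is continuous, so by the universality of Gaussian RBF networks on compact domains (\cite{Park91}), or equivalently the density of linear spans of universal kernels (\cite{Micchelli06}), there exist a number $N_j$, centers $\{\bm{c}^{(j)}_i\}_{i=1}^{N_j}$, and scalar weights $\{w^{(j)}_i\}_{i=1}^{N_j}$ with
\begin{equation}
\sup_{\bm{x}\in K} \Big| \sum_{i=1}^{N_j} w^{(j)}_i k(\bm{x},\bm{c}^{(j)}_i) - h_j(\bm{x}) \Big| < \frac{\epsilon}{\sqrt{n}}.
\end{equation}

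The next step is to pool these $n$ scalar approximators into a single RBF vector $\bm{K}(\cdot)$ shared across all output coordinates. I would define $\{\bm{c}_i\}_{i=1}^N$ as the union (with possible relabelling) of all centers $\bigcup_{j=1}^n \{\bm{c}^{(j)}_i\}$, setting $N := \sum_j N_j$, and assemble $\bm{W} \in \mathbb{R}^{n \times N}$ by placing $w^{(j)}_i$ in the row corresponding to coordinate $j$ and the column corresponding to center $\bm{c}^{(j)}_i$, and zeroing out the other entries. Row $j$ of $\bm{W}\bm{K}(\bm{x})$ then reproduces exactly the $j$-th scalar approximator, so
\begin{equation}
\sup_{\bm{x}\in K} \|\bm{W}\bm{K}(\bm{x}) - h(\bm{x})\| = \sup_{\bm{x}\in K} \sqrt{\sum_{j=1}^n \Big((\bm{W}\bm{K}(\bm{x}))_j - h_j(\bm{x})\Big)^2} < \epsilon,
\end{equation}
which yields $\sup_{\bm{x}\in K} \|\phi(\bm{x}) - g(\bm{x})\| < \epsilon$ for the corresponding $\phi \in \pazocal{M}$, proving sup-universality.

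There is no real obstacle here beyond a careful citation of the right scalar universality result and the bookkeeping of sharing one center set across all $n$ coordinates; since the statement is only about \emph{unconstrained} weights $\bm{W} \in \mathbb{R}^{n\times N}$, the box-constraint $\rho$ from \Cref{theo:main:kerneldiffeo} does not enter, so no additional scaling argument is required. The only mild subtlety is that $\mathbb{R}^n$ is not compact, so one must understand sup-universality in the standard sense of uniform approximation on every compact subset, which is built into the classical RBF result invoked above.
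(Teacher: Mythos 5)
Your argument is correct and is essentially the same route the paper takes: the paper states Lemma~\ref{lem:unconstrainedldk} without a written proof, relying on the preceding citations of \citet{Park91} and \citet{Micchelli06} to reduce sup-universality of the residual map to classical RBF/universal-kernel density, which is exactly the reduction you carry out (peel off the identity, approximate $h=g-\mathrm{id}$ coordinate-wise, pool centers with a zero-padded unconstrained $\bm{W}$). Your version is in fact more explicit than the paper's, and your closing remarks on the fixed shared center set and the compact-subset reading of sup-universality are the right details to pin down.
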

While Lemma \ref{lem:unconstrainedldk} establishes the principle expressiveness of a kernel-based residual mapping, it does not readily extend to the bijective formulation in \Cref{theo:main:kerneldiffeo} due to the weight constraints \eqref{eq:main:kerneldiffeo:boxconstraint}. More specifically, the mapping is Lipschitz bounded 
with 
\begin{equation}
    \label{eq:lip_bound}
    L \propto \sum_{i=1}^{N}|w_i| \left\|\frac{\partial k(\bm{x}, \bm{x}_i)}{\partial x_j}\right\|.
\end{equation}
Since the partial derivative of a Gaussian kernel is bounded, the box constraints on $w_i$ directly translate into a maximum value of $L$.
In \citep{paper:teshima20nodes}, it is established that any~$\pazocal{C}^2$-diffeomorphism can be approximated over a compact set by a composition of flows generated by Lipschitz continuous dynamical system $f \in \pazocal{L}(\mathbb{R}^n)$. This property extends to our DD-RBFN architecture, despite the Lipschitz bound \eqref{eq:lip_bound}.

\begin{theorem}[$\sup$-universality of DD-RBFNs]
    \label{theo:sub_univ}
	Let $\phi_t \in \pazocal{M}_{\rho}$ define the set of all functions that satisfy \Cref{theo:main:kerneldiffeo}. Then, the composite model class
	\begin{equation}
        \label{eq:class_ddrbfn}
		\pazocal{M}_{\text{DD-RBFN}} = \Big\{ \bigcomp_{t=1}^T \phi_t | \phi_t \in \pazocal{M}_{\rho}, k \in \mathbb{N}_+ \Big\}
	\end{equation}
	is a $\sup$-universal approximator for $\pazocal{C}^2$ diffeomorphisms.
\end{theorem}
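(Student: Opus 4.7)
The plan is to reduce the $\pazocal{C}^2$-diffeomorphism approximation task to a two-step chain already supported by the earlier results. Concretely, I would invoke the result of \citep{paper:teshima20nodes} cited right before the theorem statement: any $\pazocal{C}^2$-diffeomorphism on a compact domain can be approximated, in the $\sup$-norm, by a finite composition of flow maps $\Phi^f_\tau$ generated by Lipschitz continuous vector fields $f\in\pazocal{L}(\mathbb{R}^n)$. Hence it suffices to show that each such flow map $\Phi^f_\tau$ lies in the $\sup$-closure of $\pazocal{M}_{\text{DD-RBFN}}$. The full claim then follows by a standard triangle-inequality argument (together with the fact that composition of Lipschitz maps on a compact set is a continuous operation in the $\sup$-norm).

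To realize $\Phi^f_\tau$ by a DD-RBFN, I would time-discretize the flow: split $[0,\tau]$ into $T$ sub-intervals of size $h=\tau/T$ and consider the Euler-like iteration $\bm{z}_{t+1} = \bm{z}_t + h\,f(\bm{z}_t)$. Classical ODE theory gives that this iteration converges uniformly to $\Phi^f_\tau$ on any compact set as $T\to\infty$, and a Grönwall argument confines all iterates to an enlarged compact set $K'\supset K$ independent of $T$. For each individual step, \Cref{lem:unconstrainedldk} supplies an unconstrained residual RBF map $\bm{z}\mapsto \bm{z}+\bm{W}\bm{K}(\bm{z})$ uniformly approximating $\bm{z}\mapsto \bm{z}+h\,f(\bm{z})$ on $K'$. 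The key observation is that, since $f$ is fixed and bounded on $K'$, the target residual $h\,f$ has magnitude $O(h)$; consequently the RBF weights produced by \Cref{lem:unconstrainedldk} can be chosen of order $O(h)$ as well. Fixing $N$ and $\bm{\Sigma}$ so that the bound $\rho(n,N,\bm{\Sigma})$ in \eqref{eq:main:kerneldiffeo:boxconstraint} is a strictly positive constant, any sufficiently small $h$ makes $|w_{i,j}|<\rho$, placing the step in $\pazocal{M}_\rho$ and thus making it a diffeomorphism by \Cref{theo:main:kerneldiffeo}.

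Composing the $T$ admissible layers through \Cref{prop:diffeocomp} yields an element of $\pazocal{M}_{\text{DD-RBFN}}$, and by construction this element approximates $\Phi^f_\tau$ on $K$. Stacking the approximants of each flow in the Teshima-style decomposition concludes the proof.

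The main obstacle I anticipate is simultaneously controlling two competing error sources as $T$ grows. The Euler discretization error decays in $h$, but the per-step RBF approximation error from \Cref{lem:unconstrainedldk} must be driven to order $o(h)$ so that, after $T=\tau/h$ compositions, the accumulated error (which inflates by at most a factor $e^{L\tau}$ through the Lipschitz constants of the layers) still vanishes. This in turn requires choosing the number of RBF centers per layer $N_t$ and the bandwidths $\bm{\Sigma}_t$ as a function of $h$, while ensuring the weight bound $\rho_t$ remains compatible with the $O(h)$ magnitude of the intended residual. Once this joint tuning is made explicit, together with the a priori containment of all iterates in $K'$, uniform convergence on $K$ follows and yields $\sup$-universality.
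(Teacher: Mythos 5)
Your proposal follows essentially the same route as the paper's proof: reduce to the flow-endpoint decomposition of $\pazocal{C}^2$-diffeomorphisms from \citep{paper:teshima20nodes}, Euler-discretize each flow, approximate each step with a residual RBF layer via \Cref{lem:unconstrainedldk}, and note that the $O(h)$ magnitude of the per-step residual makes the box constraint \eqref{eq:main:kerneldiffeo:boxconstraint} non-binding for small enough step size (the paper phrases this as the rescaled bound $\rho_\delta = \rho/|\delta|$ blowing up as $\delta \to 0$, which is the same observation). Your treatment is in fact more careful than the paper's, since you explicitly flag the accumulation of per-step approximation error over $T = \tau/h$ compositions and the Gr\"onwall-type containment of iterates, which the paper's proof glosses over.
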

\begin{proof}
Exploiting the analogous interpretation of the DD-RBFN architecture as a discrete-time system, the composite mapping \eqref{eq:discrete_sys} is equilivant to an Euler discretization of a continuous-time ODE 
\begin{equation}
    \label{eq:euler_discrete_sys}
    \bm{z}_{t+1} = \bm{z}_{t} + \delta\bm{W}_{t}\bm{K}_{t}(\bm{z}_{t}), 
\end{equation}
with newly introduced virtual time-step $\delta\in(0,1]$. Due to $\delta$, the admissible weight bound is adjusted to
\begin{equation}
    \rho_\delta(n,N,\bm{\Sigma},\delta) = \frac{1}{|\delta|}\rho(n,N,\bm{\Sigma}),
\end{equation}
where $\rho_\delta$ is the virtual time-step dependent weight bound. Since the coefficient bound $\rho_\delta$ becomes arbitrary large in the limit $\delta\to0$, the instantaneous integration step of any $f \in \pazocal{L}(\mathbb{R}^n)$ can be approximated given a sufficiently small virtual time-step $\delta$. Thus, Lemma \ref{lem:unconstrainedldk} applies for each layer $\phi_t \in \pazocal{M}_{\rho}$ in \eqref{eq:class_ddrbfn}. Consequently, a sufficiently long composition $\pazocal{M}_{\text{DD-RBFN}}$ can approximate arbitrary flows of Lipschitz continuous ODEs. 
\end{proof}
The remainder of the proof is based on the fact that locally supported diffeomorphisms can
be described by a finite composition of flow endpoints generated by ODEs integrated over unit time \citep{paper:teshima20nodes, Teshima23}. The implication of considering a virtual time-step $\delta$ in \eqref{eq:euler_discrete_sys} is illustratively depicted in \Cref{fig:delta_limit}. Note that the required network depth to reach a flow endpoint increases with decreasing $\delta$, since each integration step is realized by a single layer.

\begin{figure}
    \centering
    \includegraphics[width=0.48\textwidth]{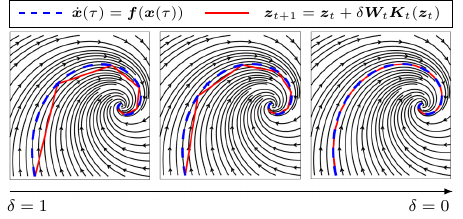}
    \caption{Illustration of virtual time-step $\delta$ and resulting implication on the admitted flow-endpoints.}
    \label{fig:delta_limit}
\end{figure}

\subsection{Learning Diffeomorphic Lyapunov Functions from Data} %
To finally obtain a Lyapunov function from data, we solve the diffeomorphic optimization problem \eqref{eq:dclop} using our proposed architecture. To this end, we exploit the analogous interpretation of DD-RBFN as a time-discrete system to solve the diffeomorphic optimization problem using tools from optimal control theory. 

\begin{algorithm}[t]
\small
\caption{MPC-based DD-RBFN optimization}\label{alg:cap}
\begin{algorithmic}[1]
\Require $V_b(\bm{x})$, $\mathbb{D}$, $\bm{\Sigma}$, $H$
\For{$t=0, \dots, \texttt{\#Iterations}$}
    \State $\{\bm{W}_{t+1},\ldots,\bm{W}_{t+H}\} \gets $ \texttt{solve} \eqref{eq:mpc_diffeo} 
    \State $\bm{\Phi} \gets \bm{\phi}_{t+1}\circ\bm{\phi}_t$ \Comment{\texttt{Update} $\bm{\Phi}$ \texttt{with} $\bm{W}_{t+1}$ only}
    \State $V_{\bm{\Phi}} \gets V_b \circ \bm{\Phi}$ \Comment{\texttt{Update Lyapunov function}}
    \State \texttt{discard} $\{\bm{W}_{t+2},\ldots,\bm{W}_{t+H}\}$
\EndFor
\State \Return $\bm{\Phi}$, $V_{\bm{\Phi}}$
\end{algorithmic}
\label{ch3:alg:sequence}
\end{algorithm}

\textcolor{black}{Specifically, each network layer $t$ corresponds to a time-step, the output at each layer $\bm{z}_t$ represents the system's state and the network weights $\bm{W}_t$ correspond to control inputs. Together with the weight constraints $\rho_t$, it results in an input-constrained system.} We deploy \textit{model predictive control} (MPC), which is a common method to solve nonlinear, non-convex problems by iteratively optimizing the control inputs over a prediction horizon $H$ \citep{book:grune2017nonlinear, Kamthe2018}. \textcolor{black}{Since MPC facilitates input constraint handling, it is convenient to deploy here.} Hence, at each layer, we optimize over the weights 
	\begin{subequations}
        \label{eq:mpc_diffeo}
		\begin{align}
			\label{eq:mpc_diffeo_obj}
			\bm{W}_{t+1} &= \argmin_{\bm{W}_{t+1},\ldots,\bm{W}_{t+H}} \sum_{h=1}^{H}\sum_{i=1}^{N} \nabla_{\bm{x}}^\intercal V_{\Phi}(\bm{z}_{t+h}^{(i)}) \dot{\bm{x}}^{(i)} \\
			\label{eq:mpc_diffeo_c1}
			&\text{ s.t.} \quad\quad\, \bm{z}_{t+1} = \bm{z}_t + \bm{W}_{t}\bm{K}_{t}(\bm{z}_{t}) \\ 
            \label{eq:mpc_diffeo_c2}
            & \quad\quad |w_{i,j,t+h}| < \rho_{t+h} \quad \forall h \in [1, \dots, H]  \\
            \label{eq:mpc_diffeo_c3}
            & \qquad\qquad\, \bm{z}_{0}^{(i)} = \bm{x}^{(i)} 
		\end{align}
	\end{subequations}
where \eqref{eq:mpc_diffeo_c1} represents the dynamics and \eqref{eq:mpc_diffeo_c2} the input constraints. The resulting $\bm{W}_{t+1}$ is used to iteratively append the DD-RBFN and by minimizing the empirical Lyapunov risk \eqref{eq:mpc_diffeo_obj} \citep{Chang19}, the diffeomorphic optimization is guided in a manner that induces constraint satisfaction on the samples. 

\section{EVALUATION}
To evaluate our proposed approach, we first apply our method to different attractor landscapes, including point attractors, two-attractor systems, and limit cycles. Second, we demonstrate the expressiveness of our approach in a comparison evaluation with related works. An implementation of \Cref{ch3:alg:sequence} is available at \url{https://github.com/stesfazgi/Diffeomorphic-Lyapunov-Functions} \looseness=-1 


\begin{figure*}[t]
        \centering
		\includegraphics[trim={0cm 0.11cm 0cm 0cm}, clip, width=0.8\textwidth]{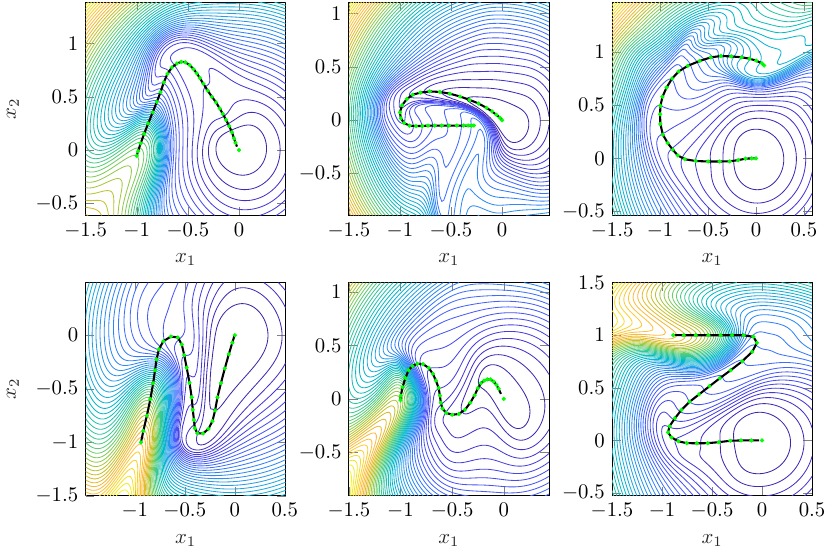} \vspace{-0.2cm}
	\caption[Comparison with WSAQF Lyapunov functions for different LASA shapes]{Successfully learned diffeomorphic Lyapunov functions for $6$ exemplary shapes of the LASA dataset with data trajectories (black) and  samples on which the Lyapunov conditions are satisfied (green).} 
	\label{fig:eval_comp_dmp}
\end{figure*}

\subsection{Attractor Landscapes} 
\textbf{Single Equilibrium.}
For this evaluation, we use the LASA handwriting dataset \citep{khansari2011}, which is a popular benchmark in the learning stable dynamical system literature \citep{perrin2016fast, Rana20, Zhang23}. 
\Cref{fig:eval_comp_dmp} shows the result of applying the diffeomorphic learning approach to 6 exemplary shapes with a base function $V_b(\bm{x}) = 0.1\bm{x}^\top\bm{x}$. Our proposed approach successfully learns a diffeomorphism such that the transformed function $V_\Phi(\cdot)$ constitutes a valid Lyapunov function for the trajectories.  

\textbf{Two Attractor System.} 
Additionally, we deploy the approach on a dynamical system with two stable equilibria and one unstable equilibrium as depicted by the vector field in \Cref{fig:eval_meq_problemsetup} (left). For the training data, the system is initialized at 6 different positions and simulated to convergence. 
The obtained $V_\Phi(\cdot)$ 
is depicted in \Cref{fig:eval_meq_problemsetup} (right). 
The learned diffeomorphic Lyapunov function is consistent
with all demonstrations included in the dataset and successfully identified the position of the two stable equilibria. \looseness=-1

\begin{figure}[H]
    \centering
    \begin{tikzpicture}
        \node[inner sep=0pt] (russell) at (0,0)
        {\includegraphics[trim={1cm 0.5cm 0cm 0cm}, clip,width=.21\textwidth]{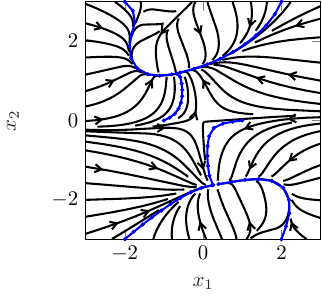}};
        \node[inner sep=0pt] (whitehead) at (4,0)
        {\includegraphics[trim={1cm 0.5cm 0cm 0cm}, clip,width=.21\textwidth]{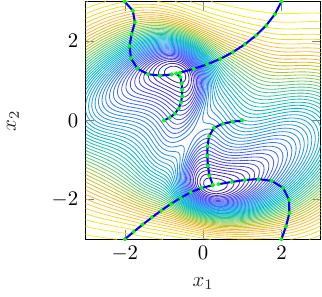}};
    \end{tikzpicture}\vspace{-0.2cm}
    \caption{Two equilibria system. \textbf{Left:} Vector field of the system (black) and data samples (blue). 
    \textbf{Right:} Learned diffeomorphic Lyapunov-like function.} 
    \label{fig:eval_meq_problemsetup}
\end{figure}

\begin{table*}[]
    \centering
    \caption{Resulting mean and standard deviation of the violation rate, i.e., the percentage of data points on which the Lyapunov conditions are violated (smaller=better), over 6 shapes of the LASA handwriting dataset.}
    \footnotesize
    \begin{tabular}{l | c c c c c c c}
        \toprule
                 & MLP & PD-MLP & ACL & i-ResNet & NODE & \textbf{DD-RBFN (Our)} \\
        \midrule
         Angle & $2.14\pm0.86$ & $2.25\pm1.86$ & $2.80\pm1.61$ & $9.74\pm9.19$ & $2.58\pm1.82$ & $\bm{2.00\pm0.71}$ \\
         C-Shape & $7.29\pm5.35$ & $3.20\pm0.26$ & $5.58\pm2.13$ & $4.06\pm0.41$ & $2.48\pm0.45$ & $\bm{2.45\pm0.86}$ \\
         Z-Shape & $9.95\pm7.01$ & $8.44\pm2.14$ & $6.65\pm3.92$ & $3.43\pm4.03$ & $7.02\pm3.89$ & $\bm{3.26\pm2.41}$ \\
         N-Shape & $15.05\pm6.52$ & $11.69\pm3.12$ & $12.45\pm2.79$ & $23.79\pm1.92$ & $14.69\pm4.76$ & $\bm{10.89\pm1.30}$ \\
         Sine & $11.52\pm8.54$ & $4.55\pm3.10$ & $2.84\pm1.18$ & $13.95\pm11.62$ & $4.18\pm3.52$ & $\bm{1.41\pm0.50}$ \\
         Bended line & $33.03\pm11.71$ & $15.66\pm2.07$ & $\bm{7.78\pm7.51}$ & $38.85\pm2.70$ & $20.02\pm15.14$ & $8.44\pm4.38$ \\
         \bottomrule
    \end{tabular}
    \label{tab:comparison}
\end{table*}

\textbf{Limit Cycle System.} 
Finally, we use 
the proposed approach to find a Lyapunov-like function for a 
limit cycle system. To this end, we consider the well-known Van der Pol oscillator \citep{pol26} depicted in \Cref{fig:eval_limit} (left) with the stable limit cycle highlighted in red. For training, we sample 20 approximately equally spaced data points along the limit cycle and simulate 4 trajectories starting in the corners of the state space. The sampled data and the resulting diffeomorphic function $V_\Phi(\cdot)$ are shown in \Cref{fig:eval_limit} (right). It can be seen that the zero gradient contour line of $V_\Phi(\cdot)$ (marked in red) aligns well with the data sampled along the limit cycle (blue dots). Additionally, the Lyapunov constraints are fulfilled along the trajectories converging towards the limit cycle.

\begin{figure}[H]
    \centering
    \begin{tikzpicture}
        \node[inner sep=0pt] (russell) at (0,0)
        {\includegraphics[trim={1cm 0.5cm 0cm 0cm}, clip,width=.21\textwidth]{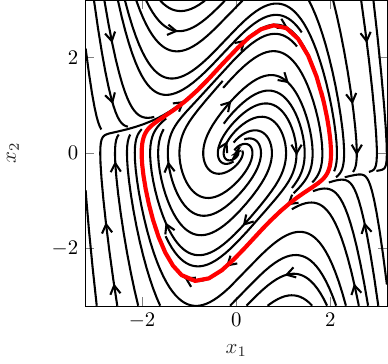}};
        \node[inner sep=0pt] (whitehead) at (4,0)
        {\includegraphics[trim={1cm 0.5cm 0cm 0cm}, clip,width=.215\textwidth]{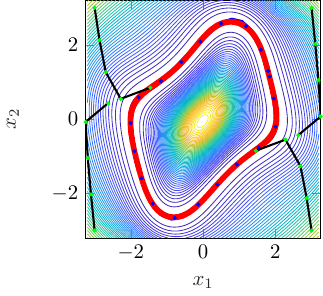}};
    \end{tikzpicture}\vspace{-0.2cm}
    \caption{\textbf{Left:} Vector field of system (black) with limit cycle (red). \textbf{Right:} Contour lines of learned diffeomorphic Lyapunov-like function with sampled data.} 
    \label{fig:eval_limit}
\end{figure}

\subsection{Comparison Evaluation} 

For the comparison evaluation, we again consider the single-equilibrium case using the LASA dataset. In the dataset, each shape consists of a total of 7 demonstrations with 1000 data points each. We train the methods on a subset of the data of one demonstration and evaluate the generalization capabilities of the learned Lyapunov function on the remaining 6 demonstrations by computing a violation rate, i.e., the percentage of data points on which the Lyapunov conditions are violated. We repeat this process for 5 different seeds. For the comparison, we use other diffeomorphism constructions, i.e., ACL \citep{paper:dinh16}, i-ResNets \citep{behrmanninvertible}, and NODEs \citep{paper:chen18}. Moreover, we include two MLP-based Lyapunov function approximators, similar to the learner deployed in \citep{Chang19} and \citep{dawson2021}. In \citep{dawson2021}, the authors guarantee the positive definiteness of the output by taking the inner product of the last hidden layer. Therefore, we call the architecture PD-MLP in the following. 

\Cref{tab:comparison} reports the obtained results for the 6 shapes shown in \Cref{fig:eval_comp_dmp}. 
With the exception of the Bended line, the PD-MLP architecture outperforms the naive MLP, which indicates the benefit of prior structural knowledge. Moreover, the generative approaches show a similar performance as the PD-MLP, which indicates the principal expressivity of the indirect, diffeomorphic Lyapunov learning framework. 
Within the diffeomorphic methods, our DD-RBFN-based approach achieves the best performance for all of the shapes except the Bended line, for which ACL has a slightly lower violation rate.

\section{Conclusion}
In this work, we present a diffeomorphic learning framework to infer Lyapunov functions from data. 
By designing simple base functions and optimizing over topology-preserving maps, we successfully encode prior geometrical knowledge during inference. 
Furthermore, we propose a novel, diffeomorphic modeling approach based on RBFs to facilitate precise, data-driven transformations. Finally. we evaluate our method on systems with different attractor landscapes, including multiple equilibria and limit cycles, and in a comparison evaluation with related works.  

\subsubsection*{Acknowledgements}
This work was supported by the European Research Council (ERC) Consolidator grant “CO-MAN” under grant agreement no. 864686.

\bibliography{mybib}

\section*{Checklist}

 \begin{enumerate}

 \item For all models and algorithms presented, check if you include:
 \begin{enumerate}
   \item A clear description of the mathematical setting, assumptions, algorithm, and/or model. [Yes]
   \item An analysis of the properties and complexity (time, space, sample size) of any algorithm. [Yes. Analyzed approximation capabilities.]
   \item (Optional) Anonymized source code, with specification of all dependencies, including external libraries. [Yes.]
 \end{enumerate}

 \item For any theoretical claim, check if you include:
 \begin{enumerate}
   \item Statements of the full set of assumptions of all theoretical results. [Yes]
   \item Complete proofs of all theoretical results. [Yes]
   \item Clear explanations of any assumptions. [Yes]     
 \end{enumerate}

 \item For all figures and tables that present empirical results, check if you include:
 \begin{enumerate}
   \item The code, data, and instructions needed to reproduce the main experimental results (either in the supplemental material or as a URL). [Yes.]
   \item All the training details (e.g., data splits, hyperparameters, how they were chosen). [Yes]
         \item A clear definition of the specific measure or statistics and error bars (e.g., with respect to the random seed after running experiments multiple times). [Not Applicable]
         \item A description of the computing infrastructure used. (e.g., type of GPUs, internal cluster, or cloud provider). [No]
 \end{enumerate}

 \item If you are using existing assets (e.g., code, data, models) or curating/releasing new assets, check if you include:
 \begin{enumerate}
   \item Citations of the creator If your work uses existing assets. [Yes]
   \item The license information of the assets, if applicable. [Not Applicable]
   \item New assets either in the supplemental material or as a URL, if applicable. [Not Applicable]
   \item Information about consent from data providers/curators. [Not Applicable]
   \item Discussion of sensible content if applicable, e.g., personally identifiable information or offensive content. [Not Applicable]
 \end{enumerate}

 \item If you used crowdsourcing or conducted research with human subjects, check if you include:
 \begin{enumerate}
   \item The full text of instructions given to participants and screenshots. [Not Applicable]
   \item Descriptions of potential participant risks, with links to Institutional Review Board (IRB) approvals if applicable. [Not Applicable]
   \item The estimated hourly wage paid to participants and the total amount spent on participant compensation. [Not Applicable]
 \end{enumerate}

 \end{enumerate}

\end{document}